\documentclass{article} 
\usepackage[OT1]{fontenc}
\PassOptionsToPackage{table}{xcolor}
\usepackage{iclr2027_conference,times}

\usepackage{amsmath,amsfonts,bm}

\def\eqref#1{equation~\ref{#1}}

\def\1{\bm{1}}

\DeclareMathAlphabet{\mathsfit}{\encodingdefault}{\sfdefault}{m}{sl}
\SetMathAlphabet{\mathsfit}{bold}{\encodingdefault}{\sfdefault}{bx}{n}

\newcommand{\KL}{D_{\mathrm{KL}}}

\usepackage{amsmath,amssymb,amsthm}
\usepackage{enumitem}
\usepackage{hyperref}
\usepackage{url}
\usepackage{booktabs}
\usepackage{multirow}
\usepackage{graphicx}
\usepackage[table]{xcolor}

\title{%
\raisebox{-0.75em}{\includegraphics[height=2.5em]{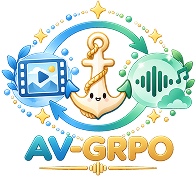}}%
\,AV-GRPO: Modality-Anchored Decoupling Diffusion Reinforcement Learning for Joint Audio-Video Generation%
}

\author{%
\parbox[t]{\dimexpr\textwidth-2\tabcolsep\relax}{%
\raggedright
\textbf{Zhiyu Xu}\textsuperscript{1,2}\quad
\textbf{Weilong Yan}\textsuperscript{3}\quad
\textbf{Yufei Shi}\textsuperscript{4}\quad
\textbf{Shiyang Li}\textsuperscript{5}\quad
\textbf{Yihao Liu}\textsuperscript{1}\\[2pt]
\textbf{Kin-Man Lam}\textsuperscript{2,*}\quad
\textbf{Yuewen Cao}\textsuperscript{1,*}\\[4pt]
\normalfont\small
\textsuperscript{1}Shanghai AI Laboratory\quad
\textsuperscript{2}The Hong Kong Polytechnic University\\[3pt]
\textsuperscript{3}National University of Singapore\quad
\textsuperscript{4}Nanyang Technological University\\[3pt]
\textsuperscript{5}Zhejiang University\quad
\textsuperscript{*}Corresponding authors. 
}}

\iclrfinalcopy 
\begin{document}

\maketitle

\begin{abstract}
Recent years have witnessed remarkable progress in joint audio-video generation. Nevertheless, existing models still suffer from limited per-modality fidelity, inadequate text-modality alignment, and weak cross-modal synchronization. Reinforcement-learning-based post-training offers a promising avenue for addressing these shortcomings. However, naively extending such approaches to joint audio-video generation is challenging. Heterogeneous multimodal rewards entangle the learning signals of the two modalities and obscure credit assignment. Jointly optimizing both modality towers also incurs substantial computational cost despite their distinct optimization dynamics. Furthermore, the difficulty of synchronization evaluation varies with the sampled counter-part modality, making fair reward comparison difficult. To address these challenges, we propose AV-GRPO, a modality-anchored online diffusion reinforcement learning framework, together with 5DAV, a fully decoupled and difficulty-controllable training dataset. AV-GRPO integrates three key components: (1) modality-anchored rollouts that disentangle learning signals while reducing anchor-induced difficulty variation; 
(2) trajectory-locked frozen-tower optimization that reduces training costs and redirects credit assignment, thereby simplifying optimization.
and (3) adaptive objectives and perturbation strengths tailored to modality-specific dynamics and sample difficulty. Collectively, these designs transform coupled multi-modal preference learning into a set of conditional unimodal subproblems, enabling more accurate reward attribution and more effective synchronization optimization. We construct 5DAV, a dataset decoupled along five dimensions, to facilitate systematic training. Experiments on JavisBench and VABench demonstrate that AV-GRPO consistently outperforms LTX-2.3 in generation quality, semantic alignment, and cross-modal synchronization under both LoRA and full fine-tuning settings. Extensive ablation studies further validate the effectiveness of the proposed designs. Our code and data is available at \href{https://github.com/zhiyuxu03/AV-GRPO}{AV-GRPO}. 
\end{abstract}

\section{Introduction}

Joint audio--video generation has advanced rapidly through unified and interacting dual-stream models~\citep{hacohen2026ltx,low2025ovi,liu2025javisdit,team2026mova}. Yet generated audio and video still struggle to achieve high quality together: either stream may contain perceptual artifacts, one or both may not reflect the text prompt, and otherwise plausible streams may depict mismatched events or drift out of sync. Scaling pretraining alone does not directly prioritize these failures. Reward-guided post-training instead turns perceptual and semantic evaluators into learning signals, with demonstrated benefits in language and visual generation~\citep{rafailov2023direct,shao2024deepseekmath,liu2026flow}. Extending it to joint generation requires improving both streams while preserving their semantic and temporal dependence.

A natural baseline treats each generated audio--video pair as one policy output. For every prompt, it samples groups of paired rollouts, evaluates audio quality and alignment, video quality and alignment, and cross-modal synchronization, aggregates the heterogeneous rewards into a group-relative advantage, and updates both towers~\citep{shao2024deepseekmath,liu2026flow,zheng2026diffusionnft,xue2025dancegrpo}. This straightforward joint optimization has three difficulties.

\textbf{First, mixed rewards complicate model optimization and fair reward comparison.} Audio quality, visual quality, text alignment, and synchronization may favor different candidates, making their joint optimization difficult. 
Moreover, synchronization rewards depend on the sampled counterpart: matching audio to a steady scene can be easier than frequent visual events. A higher score may therefore reflect an easier counterpart as well as better alignment, making comparisons across jointly varying pairs less controlled. 
\textbf{Second, joint updates incur high memory costs and can misassign credit.} Backpropagation and training states are required for both towers, yet a shared advantage updates both even when its improvement primarily comes from one modality. Their interactions throughout denoising further complicate assigning that improvement to the responsible tower. 
\textbf{Third, different modalities have different optimization dynamics.} Their pretrained capabilities, latent scales, reward sensitivities, and learning speeds differ. Shared objectives and noise strengths can favor one branch or destabilize the other, making a common configuration unsuitable.

We propose \textbf{AV-GRPO}, which alternates audio-anchored video optimization and video-anchored audio optimization (Figure~\ref{fig:Figure_pipeline}). 
\textbf{First, modality-anchored rollouts disentangle rewards and control comparison conditions.} Each group shares one complete anchor trajectory while independently sampling the target modality. The anchor reward can be omitted, simplifying optimization to target-modality quality and alignment plus synchronization against a common counterpart. This controls anchor-induced difficulty variation within each group, while refreshing anchors across groups preserves diverse conditions. 
\textbf{Second, trajectory locking and tower freezing significantly reduce memory costs and direct credit to the target tower.} The anchor states remain fixed throughout each rollout and its policy update, and only the target tower is optimized. Applying the conditional advantage to this tower avoids updating both branches with the same signal; freezing the counterpart also reduces backward-pass and training-state costs.
\textbf{Third, hyperparameter decoupling accommodates different branch dynamics.} We use modality-specific reward compositions, noise strengths, and loss weightings to support exploration and stable optimization in each tower. Alternating the target modality allows both towers to improve under controlled cross-modal conditions.

To support controlled post-training, we introduce \textbf{5DAV}, a training set of 5,760 prompts organized along five independently specified dimensions. We evaluate AV-GRPO on JavisBench and VABench against LTX-2.3 and GDPO under both LoRA and full fine-tuning. The results show broad gains in perceptual quality, text alignment, cross-modal coherence, and synchronization, while ablations test the dataset and the alternating schedule. The frozen-tower strategy also enables full-parameter post-training of the 22B LTX-2.3 model on eight NVIDIA A800 GPUs. 
Our contributions are:
\begin{itemize}
    \item We propose \textbf{AV-GRPO}, a modality-anchored reinforcement learning framework that alternates controlled modality-wise updates for clearer reward attribution and balanced audio--video improvement.
    \item We develop three complementary mechanisms: (i) modality-anchored rollouts that disentangle reward objectives and enable controlled synchronization comparisons; (ii) trajectory locking and tower freezing that isolate modality-wise credit assignment while reducing memory overhead; and (iii) decoupled optimization hyperparameters that accommodate asymmetric audio--video learning dynamics.
    \item We introduce \textbf{5DAV}, a five-dimensionally decoupled training set. Experiments on JavisBench and VABench show broad gains in generation quality, text--modality alignment, and synchronization, supported by ablations.
\end{itemize}

\section{Method}
\subsection{Preliminaries}
\label{sec:prelim}
\begin{figure}[t]
    \centering
    \includegraphics[width=1.0\linewidth]{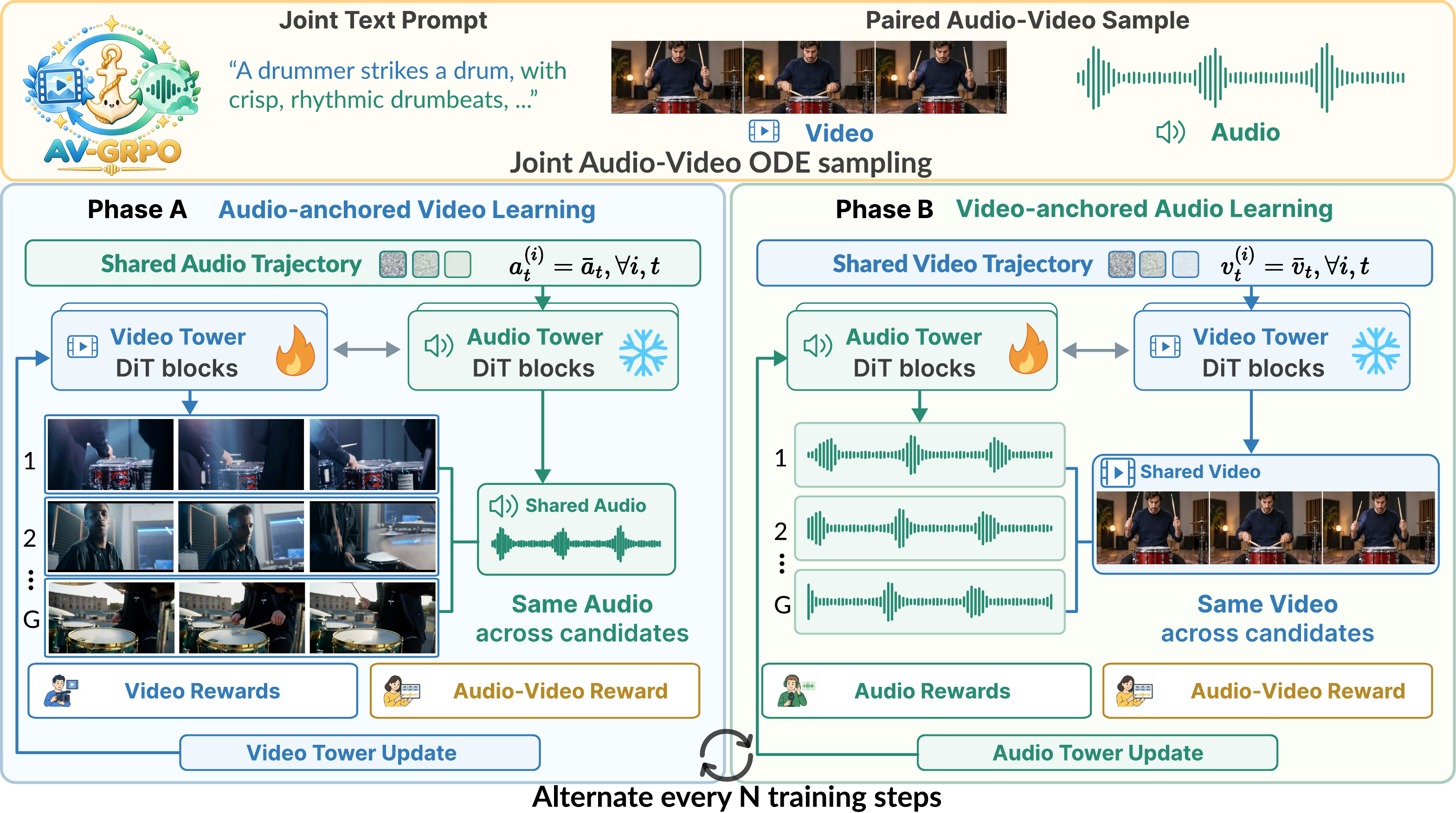}
    \vspace{-5mm}
    \caption{Overview of the AV-GRPO training pipeline.}
    \label{fig:Figure_pipeline}
    \vspace{-5mm}
\end{figure}

\textbf{Joint audio--video generation models} produce a video and its audio track from a text prompt $c$ within a single denoising process. \citep{hacohen2026ltx,liu2025javisdit}

The prevailing approach is to represent video and audio in separate latent spaces, denoted $x^{\mathrm{V}}$ and $x^{\mathrm{A}}$, and noise them following rectified flow,
\begin{equation}
x^{m}_t=(1-t)\,x^{m}_0+t\,\epsilon^{m},\qquad \epsilon^{m}\sim\mathcal N(0,I),\quad m\in\{\mathrm{A},\mathrm{V}\},
\label{eq:interp}
\end{equation}
where the timestep $t$ is shared by the two modalities and the noises are independent. The denoiser has one stream per modality, which we call the two towers and whose parameters we write as $\theta=(\theta_{\mathrm{A}},\theta_{\mathrm{V}})$; the towers exchange information through cross-modal attention or other mechanisms, so the velocity predicted by either tower depends on the current latents of both modalities:
\begin{equation}
\mathrm{d}x^{m}_t=v^{m}_\theta\big(x^{\mathrm{A}}_t,x^{\mathrm{V}}_t,t,c\big)\,\mathrm{d}t,\qquad m\in\{\mathrm{A},\mathrm{V}\}.
\label{eq:ode}
\end{equation}
At sampling time, both modalities integrate Eq.~(\ref{eq:ode}) in parallel from $t=1$ to $t=0$.

\textbf{Flow-GRPO.}
Online reinforcement learning with GRPO~\citep{shao2024deepseekmath} optimizes the step-by-step sampler as a policy: each denoising step is an action, and the reward is assigned to the final sample. This requires the sampling process to be stochastic, so that different samples can be drawn for the same prompt, and the per-step transition probability to be computable, so that there is a probability ratio to adjust. The deterministic ODE of Eq.~(\ref{eq:ode}) satisfies neither: the same initial noise always yields the same sample, 
and computing its transition probability is computationally expensive due to divergence estimation.
Flow-GRPO~\citep{liu2026flow}  therefore replaces the ODE with an SDE that has the same marginal distribution at every timestep, introducing stochasticity without changing the distribution the pretrained model generates. Discretized over $T$ steps, the sample advances by
\begin{equation}
x_{t+\Delta t}=x_t+\Big[v_\theta+\frac{\sigma_t^2}{2t}\big(x_t+(1-t)\,v_\theta\big)\Big]\Delta t+\sigma_t\sqrt{|\Delta t|}\;\epsilon,
\qquad \sigma_t=a\sqrt{\tfrac{t}{1-t}},
\label{eq:sde-step}
\end{equation}
where $v_\theta$ abbreviates $v_\theta(x_t,t,c)$, $\epsilon\sim\mathcal N(0,I)$ controls how strongly the sampling is perturbed. Each step of Eq.~(\ref{eq:sde-step}) is a Gaussian transition $\pi_\theta(x_{t+\Delta t}\mid x_t,c)$. With this sampler, Flow-GRPO draws $G$ trajectories per prompt and maximizes
\begin{equation}
J(\theta)=\mathbb E\Big[\frac1G\sum_{i=1}^{G}\frac1T\sum_{t}\Big(\min\big(r^{i}_t\hat A^{i},\ \mathrm{clip}(r^{i}_t,1-\varepsilon,1+\varepsilon)\hat A^{i}\big)-\beta\,D_{\mathrm{KL}}\big(\pi_\theta\,\|\,\pi_{\mathrm{ref}}\big)\Big)\Big],
\label{eq:grpo}
\end{equation}
where $\hat A^{i}=(R^{i}-\mathrm{mean}_i R^{i})/\mathrm{std}_i R^{i}$ is the advantage of trajectory $i$, obtained by standardizing the reward of its final sample, $R^{i}=R(x^{i}_0,c)$, within the group, and indicates whether the sample is better or worse than the others in its group; $r^{i}_t$ is the probability ratio between the current policy and the sampling policy at this step, i.e., a ratio of two Gaussian densities; $\varepsilon$ is the clip range; $\beta$ is the KL coefficient and $\pi_{\mathrm{ref}}$ the pretrained model, for which the KL term has a closed form proportional to $\|v_\theta-v_{\mathrm{ref}}\|^2$.

\subsection{AV-GRPO}
AV-GRPO applies the Flow-GRPO update of Eq.~(\ref{eq:grpo}) to one tower at a time. In an \emph{audio-anchored} phase, it fixes one audio trajectory, samples a group of video trajectories, and updates only the video tower; the \emph{video-anchored} phase is symmetric (Figure~\ref{fig:Figure_pipeline}). The phases alternate every $n$ training steps. Let $m\in\{\mathrm A,\mathrm V\}$ denote the anchor modality, $\bar m$ the target, and $\tau^m=(x^m_1,\ldots,x^m_0)$ a complete denoising trajectory.

\textbf{Modality-Anchored Rollouts.}
For prompt $c$, we first generate one audio--video pair by sampling both towers with Eq.~(\ref{eq:sde-step}). We retain the full trajectory $\tau^m_{\mathrm{anc}}$ of modality $m$ and independently sample $G$ target trajectories. At each step, its anchor state $x^m_{\mathrm{anc},t}$ replaces the state of modality $m$ in cross-modal attention. The $i$-th target advances by
\begin{equation}
x^{\bar m,i}_{t+\Delta t}=x^{\bar m,i}_{t}+\Big[v^{\bar m}_\theta+\frac{\sigma_t^2}{2t}\big(x^{\bar m,i}_{t}+(1-t)\,v^{\bar m}_\theta\big)\Big]\Delta t+\sigma_t\sqrt{|\Delta t|}\;\epsilon^{\bar m,i},
\qquad \epsilon^{\bar m,i}\sim\mathcal N(0,I),
\label{eq:anchored-step}
\end{equation}
where $v^{\bar m}_\theta$ is evaluated at $(x^{\bar m,i}_t,x^m_{\mathrm{anc},t},t,c)$. This gives a conditional Gaussian transition $\pi^{\bar m}_\theta(x^{\bar m}_{t+\Delta t}\mid x^{\bar m}_t,x^m_{\mathrm{anc},t},c)$ and $G$ final pairs $(x^m_{\mathrm{anc},0},x^{\bar m,i}_0)$. The anchor is common to the group, whereas the target sample varies.

\textbf{Reward Disentanglement.}
Each pair receives a target-modality quality and text-alignment reward $R_{\bar m}$ and a synchronization reward $R_{\mathrm{sync}}$ (Section~\ref{sec:rewards}). In a joint group, changes in either modality can alter the synchronization score. Here, the anchor reward is constant across all candidates and disappears under group centering, leaving
\begin{equation}
R^i=R_{\bar m}(x^{\bar m,i}_0)+R_{\mathrm{sync}}(x^{\bar m,i}_0,x^m_{\mathrm{anc},0}).
\label{eq:anchored-reward}
\end{equation}

Meanwhile, this controls the variation in synchronization difficulty of the anchor modality: within a group, all video (or audio) samples share the same audio (or the same video), enabling fair comparison and more targeted learning of audio–video synchronization.

This grouping changes the conditional policy being optimized, rather than merely removing a reward term. For a fixed anchor trajectory, the corresponding KL-regularized subproblem can be written as
\begin{equation}
\mathcal L_{\bar m}(\pi\mid\tau^m_{\mathrm{anc}},c)
=\mathbb E_{\tau^{\bar m}\sim\pi^{\bar m}(\cdot\mid\tau^m_{\mathrm{anc}},c)}[R_{\bar m}+R_{\mathrm{sync}}]
-\beta D_{\mathrm{KL}}\!\left(\pi^{\bar m}(\cdot\mid\tau^m_{\mathrm{anc}},c)\|\pi^{\bar m}_{\mathrm{ref}}(\cdot\mid\tau^m_{\mathrm{anc}},c)\right).
\label{eq:conditional-subproblem}
\end{equation}
The video and audio phases solve the two symmetric conditional subproblems. Holding the anchor fixed makes their reward comparisons controlled, while refreshing it between groups prevents training on a single counterpart. The formal relation between these conditional objectives and a joint KL-regularized objective is developed in Appendix~\ref{app:theory}.

For comparison, the joint formulation optimizes a distribution over paired trajectories, $\mathcal L_{\mathrm{joint}}(P)=\mathbb E_P[r(\tau^{\mathrm V},\tau^{\mathrm A},c)]-\beta D_{\mathrm{KL}}(P\|P_0)$. Its samples change both audio and video at once, whereas each AV-GRPO phase conditions on one realized trajectory and changes only the other. Fixing the \emph{complete} trajectory matters because the two towers interact at every denoising step: a fixed final audio or video sample alone would leave the intermediate cross-modal states uncontrolled. The conditional view explains why the within-group reward has a clearer interpretation without assuming that the two generation streams are independent.

\textbf{Trajectory Locking and Tower Freezing.}
The same anchor states are reused throughout each rollout and during its policy update. We freeze $\theta_m$ and update only $\theta_{\bar m}$, retaining gradient and optimizer states for the active tower while the anchor supplies a fixed cross-modal condition. Resampling the anchor for each group exposes the target to varied counterpart trajectories; switching towers every $n$ steps lets both branches improve. For the fixed anchor, the conditional GRPO objective is
\begin{align}
\mathcal{J}_{\bar m}(\theta)={}&\mathbb{E}_{c,\tau^m_{\mathrm{anc}},\{\tau^{\bar m,i}\}_{i=1}^G\sim\pi^{\bar m}_{\theta_{\mathrm{old}}}(\cdot\mid\tau^m_{\mathrm{anc}},c)}
\Bigg[\frac1{GT}\sum_{i=1}^G\sum_t\Big(\nonumber\\
&\min\big(r_t^{\bar m,i}\hat A^{\bar m,i},\,\mathrm{clip}(r_t^{\bar m,i},1-\varepsilon,1+\varepsilon)\hat A^{\bar m,i}\big)
-\beta D_{\mathrm{KL}}\!\left(\pi_\theta^{\bar m}\|\pi_{\mathrm{ref}}^{\bar m}\right)\Big)\Bigg],
\label{eq:avgrpo}
\end{align}
where $r_t^{\bar m,i}$ is the current-to-old conditional transition-probability ratio; both policies and the reference are conditioned on the anchor. Appendix~\ref{app:theory} analyzes an idealized counterpart: under a common reference joint law and contraction assumptions, alternating exact conditional Gibbs kernels converge to the joint KL-regularized optimum.

Freezing the anchor also separates the costs of generating a condition from those of learning under it. An anchor trajectory must still be sampled for each group, but its tower does not require backward activations or optimizer updates in that phase. The target tower sees the same anchor at all $G$ comparisons, allowing reward variation to reflect its sampled trajectories rather than changes in the counterpart. Alternating phases then updates each tower under the other's latest distribution instead of permanently fixing either one.

\subsection{Adaptive Noise Clipping for Robust Sampling}
The diffusion coefficient $\sigma_t=a\sqrt{t/(1-t)}$ in Eq.~(\ref{eq:sde-step}) grows near $t=1$. On our coarse grid, the resulting stochastic increment can overwhelm the latent signal during noising and destabilize training. \textbf{Adaptive Noise Clipping (ANC)} bounds its standard deviation using
\begin{equation}
s_t=\sigma_t\sqrt{|\Delta t|},\qquad
\lambda_t=\min\!\left(1,\frac{\tau}{s_t+\delta}\right),\qquad
\tilde\sigma_t=\lambda_t\sigma_t,
\label{eq:anc}
\end{equation}
where $\tau>0$ is a clipping threshold and $\delta>0$ prevents division by zero. We substitute $\tilde\sigma_t$ for $\sigma_t$ in both the stochastic term and the drift correction of Eq.~(\ref{eq:sde-step}):
\begin{equation}
x_{t+\Delta t}=x_t+\left[v_\theta+\frac{\tilde\sigma_t^2}{2t}\big(x_t+(1-t)v_\theta\big)\right]\Delta t
+\tilde\sigma_t\sqrt{|\Delta t|}\,\epsilon,\quad\epsilon\sim\mathcal N(0,I).
\label{eq:anc-update}
\end{equation}
When clipping is inactive, Eq.~(\ref{eq:anc-update}) reduces to the original update; otherwise the stochastic term scales by $\lambda_t$ and the drift correction by $\lambda_t^2$. We use 15 sampling steps and find ANC stabilizes training at larger noise strengths.

\subsection{Hyperparameter Decoupling and Training Stability}
A shared configuration can limit exploration in one tower while destabilizing the other. AV-GRPO naturally supports independent optimization configurations through its alternating frozen-tower updates, allowing noise strengths and objectives to be tailored to each modality. For LTX-2.3, we use $a_{\mathrm V}=0.02$ and $a_{\mathrm A}=0.8$ in $\sigma_t=a\sqrt{t/(1-t)}$: too little noise limits exploration, whereas too much can break denoising. 


With the original Flow-GRPO objective, the video tower becomes over-exposed after a few iterations. The late denoising stages, which control brightness and fine detail, receive weak gradients; changing only the scalar KL coefficient does not resolve this imbalance. Following LongCat-Video~\citep{team2025longcat}, we reweight the video policy and KL terms at each step:
\begin{equation}
\mathcal{J}_{\text{video}}(\theta) = 
\mathbb{E}\left[
\frac{1}{G} \sum_{i=1}^G
\Big(
\lambda_{\text{policy}}(t, \Delta t) \cdot \mathcal{L}_{\text{policy}}(\theta)
- \beta \, \lambda_{\text{KL}}(t, \Delta t) \cdot D_{\text{KL}}(\pi_\theta^{\text{video}} \| \pi_{\text{ref}}^{\text{video}})
\Big)
\right],
\label{eq:video-loss}
\end{equation}
The weighting functions are defined as:

\begin{equation}
\lambda_{\text{policy}}(t, \Delta t) = \sqrt{\frac{t}{\Delta t (1-t)}}, \qquad
\lambda_{\text{KL}}(t, \Delta t) = \frac{t}{\Delta t (1-t)}.
\label{eq:video-weights}
\end{equation}
The audio tower retains the original weighting. Appendix~\ref{app:losses} gives the full objective and gradient derivation.

\section{Experiment}
\subsection{5DAV Dataset}

To support GRPO optimization for joint audio-video generation models, we construct a \textbf{5D decoupled training set (5DAV)} that is fully disentangled across five dimensions: semantic hierarchy, sound source type, synchronization difficulty, temporal complexity, and instruction granularity. Through Cartesian product of these dimensions, our dataset achieves comprehensive capability coverage with controllable difficulty levels.
The five dimensions are defined as follows:

\begin{itemize}
    \item \textbf{Semantic Hierarchy (W)}: Entity-level (single object/person/animal and its inherent sound), Scene-level (static scene with background atmosphere), Event-level (dynamic interactions and causal events among multiple entities).
    \item \textbf{Sound Source Type (H)}: Human voice, Environmental and object sounds, Music, Mixed sources (two or more types).
    \item \textbf{Synchronization Difficulty (D)}: Irrelevant (audio and video independent), Category matching (audio category matches the scene), Frame-level synchronization (audio and video fully aligned), Physical causal synchronization (e.g., pitch rising as a train approaches).
    \item \textbf{Temporal Complexity (T)}: Single-event (single scene/event, no scene transition), Multi-event (multiple scenes/causal events, with scene transitions).
    \item \textbf{Instruction Granularity (C)}: Weak constraint (core semantics only), Medium constraint (explicit sound source/scene/event), Strong constraint (fine-grained details including precise timing, spatial position, pitch/volume variations).
\end{itemize}

The five-dimensional decomposition yields \(3 \times 4 \times 4 \times 2 \times 3 = 288\) orthogonal categories, with 20 prompts generated per category, resulting in a total of 5,760 data samples. This design promotes comprehensive coverage of audio-video generation application scenarios. Moreover, the sampling ratio of categories within each dimension can be flexibly adjusted according to the requirements of different training stages, enabling the dataset to adapt to diverse optimization objectives and model iteration paces. This design makes the training process traceable and model weaknesses localizable, providing clear guidance for data selection and reward function design.


\begin{figure}[t]
    \centering
    \includegraphics[width=1.0\linewidth]
    {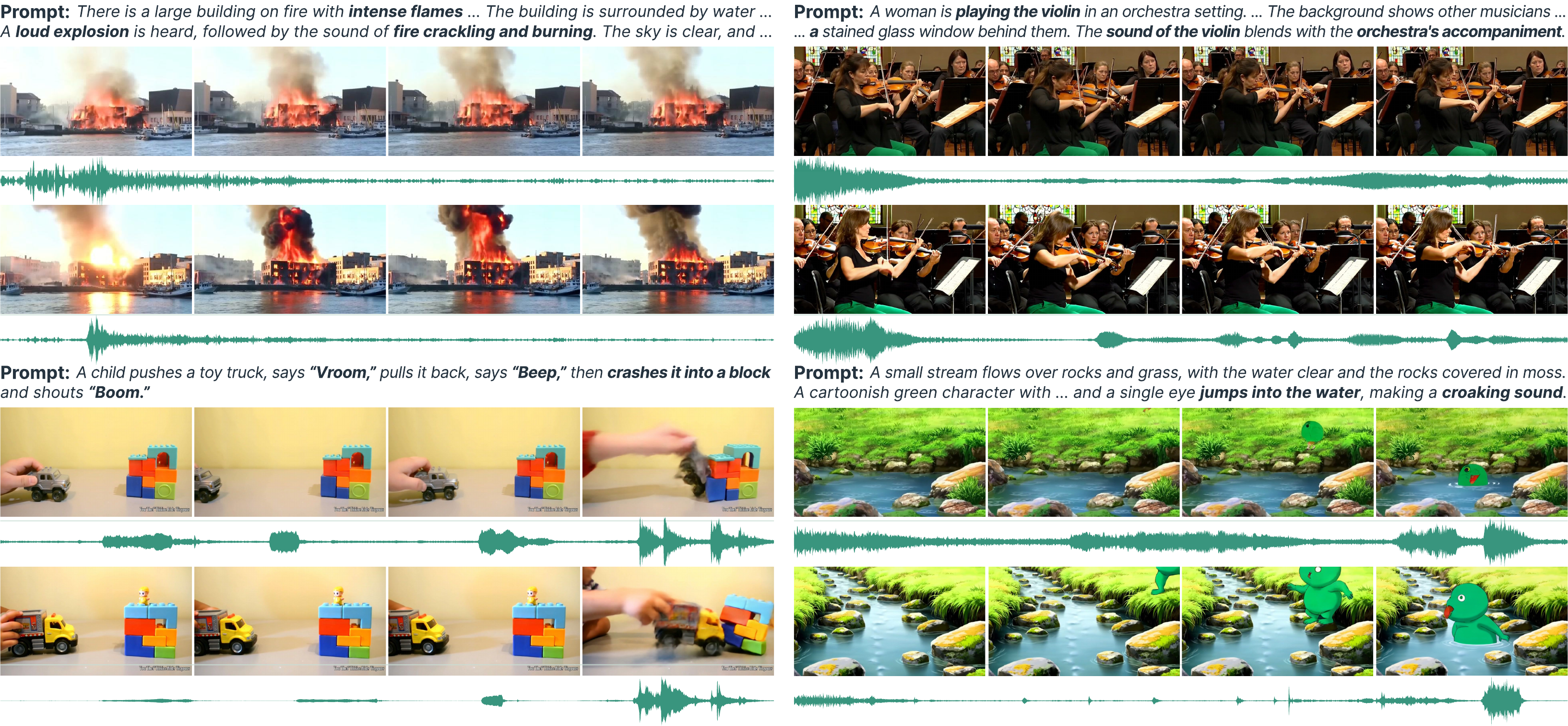}
    \vspace{-3mm}
    \caption{Qualitative comparison of LTX-2.3 and AV-GRPO~(full). For each example, rows 2--3 show the video frames and corresponding audio waveform generated by the original LTX-2.3 model; rows 4--5 show those generated by AV-GRPO (full). Bold text in the prompt highlights the visual events and sound cues that require audio--video synchronization.}
    \label{fig:qualitative_comparison}
    \vspace{-3mm}
\end{figure}

\subsection{Experimental Setup}

\paragraph{Training configuration.}
All LoRA and full-parameter runs use a single node of eight NVIDIA A800 80~GB GPUs. LTX-2.3 22B generates $544\times960$ clips of 97 frames at 24~FPS with 15 denoising steps. The video tower is updated first, and the active tower switches every four steps. Appendix~\ref{app:training} provides the remaining implementation details.

\subsubsection{Reward Models and Reward Composition}
\label{sec:rewards}
For video-tower optimization, we use VideoAlign~\citep{liu2026improving}, CLIP~\citep{radford2021learning}, and DeSync~\citep{iashin2024synchformer} to assess video quality, video--prompt alignment, and audio--video synchronization, respectively. For audio-tower optimization, we use Audiobox Aesthetics~\citep{tjandra2025meta}, CLAP~\citep{wu2023large}, and DeSync to assess audio quality, audio--prompt alignment, and audio--video synchronization, respectively. 

Following GDPO~\citep{Liu2026GDPOGR},  the video score sums standardized VideoAlign, CLIP, and negative DeSync, whereas the audio score sums standardized audio aesthetics, CLAP, and negative DeSync (Since lower DeSync is better, its sign is reversed). We standardize the resulting score again within the group to obtain the trajectory advantage.  For audio, a CLAP guardrail uses only the alignment advantage when the group's average CLAP falls below a threshold; this prevents quality and synchronization gains from masking a loss of prompt fidelity. Appendix~\ref{app:rewards} gives the equations.

\subsubsection{Evaluation Benchmarks}

We adopt JavisBench~\citep{liu2025javisdit} and VABench~\citep{hua2026vabench} as our evaluation benchmarks to comprehensively assess the generated audio-video content.

JavisBench evaluates generated samples across four complementary dimensions: (1) \textbf{Unimodal Generation Fidelity}, measured by Visual Quality (VQ) and Audio Quality (AQ), assessing the perceptual quality of each modality independently; (2) \textbf{Text-Modal Alignment}, where ImageBind similarity, CLIP score, and CLAP score are employed to measure the semantic consistency between the textual prompt and each generated modality; (3) \textbf{Audio-Video Semantic Coherence}, evaluated via Audio-Video ImageBind similarity (AV-IB) and AVH Score to quantify cross-modal semantic alignment; (4) \textbf{Audio-Video Synchronization}, assessed by JavisScore and DeSync to measure temporal correspondence between audio and video streams.

VABench organizes its evaluation into two paradigms. The first paradigm relies on expert models to provide objective perceptual quality assessments, including speech quality and naturalness (SpeechQual\&Nat), audio aesthetics (AudioAesthetic), and lip synchronization accuracy (Lip-Sync). The second paradigm leverages Multimodal Large Language Models (MLLMs) to simulate human judgments on complex audio-video semantics, covering high-level criteria such as global Alignment, Artistic quality (Artistry), and Expressiveness.

\subsection{Main Results and Analysis}

\subsubsection{Quantitative Analysis}

Table \ref{table: JavisBench} and \ref{table: VABench} report the results of AV-GRPO under both LoRA and full fine-tuning on JavisBench and VABench. Compared with LTX-2.3 (22B) and GDPO, AV-GRPO consistently improves nearly all metrics under both settings, with full fine-tuned model attaining the best overall performance.

In terms of perceptual quality, the full fine-tuned model improves AQ on JavisBench from 5.097 to 5.798 and Audio Aesthetics on VABench from 3.319 to 3.631 over LTX-2.3. For semantic alignment, it boosts CLIP on JavisBench from 0.318 to 0.327 and CLAP from 0.408 to 0.468. For cross-modality consistency and synchronization, substantial gains are observed across the board: AVH Score, DeSync, and AV-Align on JavisBench, as well as Lip Sync and DeSync on VABench, all exhibit significant improvements. In all these aspects, AV-GRPO consistently outperforms GDPO by a considerable margin.

The two training regimes show different strengths. Full fine-tuning yields the highest JavisBench AQ and AV-IB scores (5.798 and 0.247), while LoRA attains a slightly higher CLIP score (0.330 versus 0.327) and a lower DeSync value (0.554 versus 0.607). The improvements are therefore broad rather than uniform: LoRA's VQ is 5.816, below the base model's 5.855, and the full model's VABench visual-realism score is 4.395 versus 4.399 for the base model. These exceptions do not change the gains in most quality and alignment measures, but they show that different modalities and training regimes retain distinct tradeoffs.

Compared with GDPO, AV-GRPO~(full) raises JavisBench AV-IB from 0.224 to 0.247 and JavisScore from 0.202 to 0.222, reducing DeSync from 0.708 to 0.607. On VABench, its Lip Sync increases from 1.439 to 1.646 and DeSync decreases from 0.726 to 0.542. 
These comparisons focus on cross-modal measures, where controlling the ref-modality is most relevant to the learning signal.

\begin{table}[]
\centering
\caption{Main results on JavisBench. Best results are in \textbf{bold}, second-best are \underline{underlined}. VQ: Visual Quality, AQ: Audio Quality. ($\uparrow$: higher is better; $\downarrow$: lower is better).}
\scalebox{0.605}[0.605]{
\begin{tabular}{@{}lcccccccccccc@{}}
\toprule
                        &                        & \multicolumn{2}{c}{AV-Quality} & \multicolumn{4}{c}{Text-Consistency} & \multicolumn{2}{c}{AV-Consistency} & \multicolumn{3}{c}{AV-Synchrony} \\
\arrayrulecolor[HTML]{888888}\cmidrule(lr){3-4}\cmidrule(lr){5-8}\cmidrule(lr){9-10}\cmidrule(r{0pt}){11-13}\arrayrulecolor{black}
\multirow{-2}{*}{Model} & \multirow{-2}{*}{size} & VQ$\uparrow$            & AQ$\uparrow$           & TV-IB$\uparrow$  & TA-IB$\uparrow$   & CLIP$\uparrow$    & CLAP$\uparrow$   & AV-IB$\uparrow$          & AVHScore$\uparrow$          & JavisScore$\uparrow$  & DeSync$\downarrow$  & AV-align$\uparrow$ \\ \midrule
\rowcolor[HTML]{EFEFEF} 
\textit{-T2A+A2V}       &                        &                &               &         &         &         &        &                &                   &             &         &          \\
TempoTKn                & 1.3B                   & -              & -             & 0.084   & -       & 0.205   & -      & 0.139          & 0.122             & 0.103       & 1.532   & -        \\
TPoS                    & 1.0B                   & -              & -             & 0.201   & -       & 0.229   & -      & 0.124          & 0.129             & 0.095       & 1.493   & -        \\ \midrule
\rowcolor[HTML]{EFEFEF} 
\textit{-T2V+V2A}       &                        &                &               &         &         &         &        &                &                   &             &         &          \\
ReWaS                   & 0.6B                   & -              & -             & -       & 0.123   & -       & 0.280  & 0.110          & 0.104             & 0.079       & 1.071   & -        \\
See\&Hear               & 0.4B                   & -              & -             & -       & 0.129   & -       & 0.263  & 0.160          & 0.143             & 0.112       & 1.099   & -        \\
FolleyCrafter           & 1.2B                   & -              & -             & -       & 0.149   & -       & 0.383  & 0.193          & 0.186             & 0.151       & 0.952   & -        \\
MMAudio                 & 0.1B                   & -              & -             & -       & 0.160   & -       & 0.407  & 0.198          & 0.182             & 0.150       & 0.849   & -        \\ \midrule
\rowcolor[HTML]{EFEFEF} 
\textit{-T2AV}          &                        &                &               &         &         &         &        &                &                   &             &         &          \\
JavisDiT                & 3.1B                   & 1.291          & 4.478         & 0.263   & 0.143   & 0.302   & 0.391  & 0.197          & 0.179             & 0.154       & 1.039   & -        \\
UniVerse-1              & 6.4B                   & 1.357          & 4.839         & 0.272   & 0.111   & 0.309   & 0.245  & 0.104          & 0.098             & 0.077       & 0.929   & -        \\
JavisDiT++              & 2.1B                   & 1.462          & 5.049         & 0.282   & 0.164   & 0.316   & 0.424  & 0.198          & 0.184             & 0.159       & 0.832   & -        \\
LTX-2.3                 & 22B                    & 5.855          & 5.097         & 0.284   & 0.143   & 0.318   & 0.408  & 0.212          & 0.201             & 0.183       & 0.757   & 0.354    \\ \midrule
LTX-2.3+GDPO            & 22B                    & \underline{5.870}          & 5.406         & 0.283   & 0.165   & 0.313   & 0.431  & 0.224          & 0.217             & 0.202       & 0.708   & 0.360    \\
LTX-2.3+AV-GRPO(LoRA)   & 22B                    & 5.816          & \underline{5.633}         & \textbf{0.289}   & \underline{0.174}   & \textbf{0.330}   & \underline{0.449}  & \underline{0.236}          & \underline{0.235}             & \underline{0.213}       & \textbf{0.554}   & \underline{0.382}    \\
LTX-2.3+AV-GRPO(full)   & 22B                    & \textbf{5.902}          & \textbf{5.798}         & \underline{0.288}   & \textbf{0.185}   & \underline{0.327}   & \textbf{0.468}  & \textbf{0.247}          & \textbf{0.241}            & \textbf{0.222}       & \underline{0.607}   & \textbf{0.398}    \\ \bottomrule
\end{tabular}}
\label{table: JavisBench}
\vspace{-5mm}
\end{table}

\begin{table}[]
\centering
\caption{Main results on VABench. Best results are in \textbf{bold}, second-best are \underline{underlined}. For all metrics except DeSync, higher is better; for DeSync, lower is better.}
\scalebox{0.65}[0.65]{
\begin{tabular}{@{}lcccccccccccc@{}}
\toprule
Model                 & \begin{tabular}[c]{@{}c@{}}Speech\\ Q\&N\end{tabular} & \begin{tabular}[c]{@{}c@{}}Audio\\ Aes\end{tabular} & \begin{tabular}[c]{@{}c@{}}T-V\\ Align\end{tabular} & \begin{tabular}[c]{@{}c@{}}T-A\\ Align\end{tabular} & \begin{tabular}[c]{@{}c@{}}A-V\\ Align\end{tabular} & \begin{tabular}[c]{@{}c@{}}Lip\\ Sync\end{tabular} & DeSync$\downarrow$ & Alignment & \begin{tabular}[c]{@{}c@{}}Express-\\ siveness\end{tabular} & \begin{tabular}[c]{@{}c@{}}Visual\\ Realism\end{tabular} & \begin{tabular}[c]{@{}c@{}}Audio\\ Realism\end{tabular} & Artistry \\ \midrule
LTX-2.3               & 1.383                                                 & 3.319                                               & 0.211                                               & 0.398                                               & 0.227                                               & 1.351                                              & 0.800  & 4.455     & 4.371                                                       & 4.399                                                    & 3.830                                                   & 3.766    \\ \midrule
LTX-2.3+GDPO          & 1.465                                                 & 3.452                                               & 0.210                                               & 0.424                                               & 0.243                                               & 1.439                                              & 0.726  & 4.481     & 4.409                                                       & \textbf{4.413}                                                    & 3.849                                                   & 3.784    \\
LTX-2.3+AV-GRPO(lora) & \underline{1.487}                                                 & \underline{3.553}                                               & \textbf{0.217}                                              & \underline{0.446}                                               & \underline{0.261}                                               & \underline{1.585}                                              & \underline{0.594}  & \textbf{4.512}     & \textbf{4.450}                                                       & \underline{4.402}                                                    & \underline{3.865}                                                   & \underline{3.792}    \\
LTX-2.3+AV-GRPO(full) & \textbf{1.510}                                                 & \textbf{3.631}                                               & \underline{0.215}                                               & \textbf{0.468}                                               & \textbf{0.272}                                               & \textbf{1.646}                                              & \textbf{0.542}  & \underline{4.510}     & \underline{4.434}                                                       & 4.395                                                    & \textbf{3.878}                                                   & \textbf{3.798}    \\ \bottomrule
\end{tabular}}
\label{table: VABench}
\vspace{-5mm}
\end{table}

\subsubsection{Qualitative Analysis}

Figure \ref{fig:qualitative_comparison} 
 showcases several representative cases. As shown in the figure, compared with the original model, AV-GRPO-trained LTX-2.3 achieves improvements across different aspects: per-modality fidelity, modality-text alignment, and cross-modal synchronization.

The examples span event-driven effects, music performance, object sounds, and natural ambience. In each row, the video frames and corresponding waveform are shown together so that visual changes can be compared with the timing and character of the generated audio. These cases complement the aggregate benchmark scores by illustrating the kinds of cross-modal relationships optimized through the shared-anchor comparisons.

\subsection{Ablation Study}
\subsubsection{Dataset}
To demonstrate the effectiveness of our proposed dataset, we conduct a comparative experiment against the well-known audio-video dataset VGGSound~\citep{chen2020vggsound} under full fine-tuning settings. All training configurations are kept identical across experiments. The results on JavisBench-mini are reported in Figure \ref{fig:Figure ablation}.

\begin{figure}[t]
    \centering
    \includegraphics[width=1.0\linewidth]{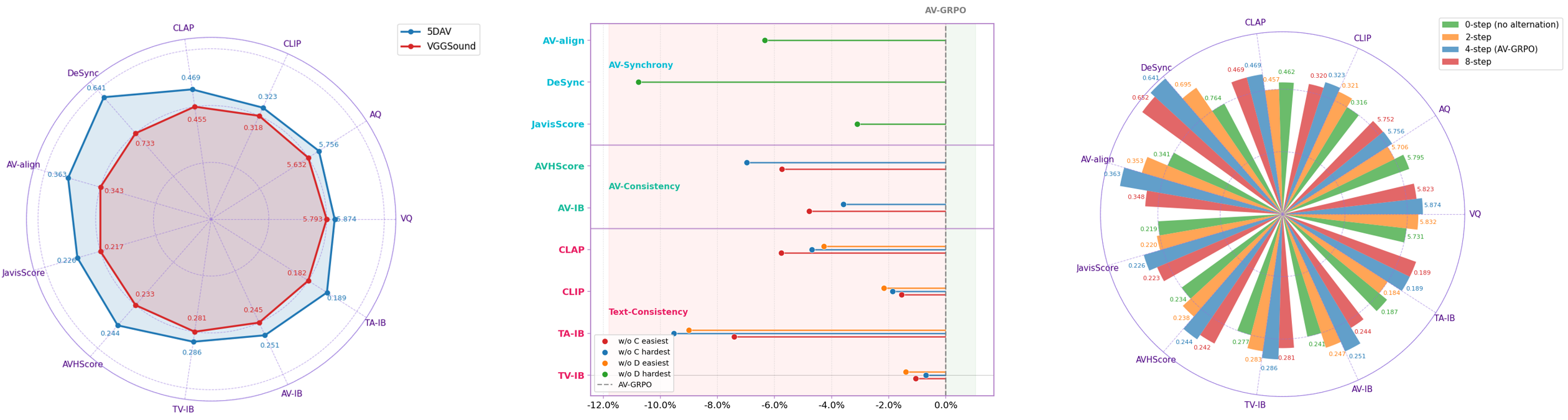}
    \vspace{-4mm}
    \caption{\textbf{Left}: Radar chart comparing performance on the 5DAV and VGGSound datasets. \textbf{Middle}: Metrics are reported as relative percentage changes with respect to the baseline.
    \textbf{Right}: Comparison of model performance with different alternating step intervals.}
    \label{fig:Figure ablation}
    \vspace{-5mm}
\end{figure}

We conduct ablation studies on the difficulty levels of Synchronization Difficulty (\textbf{D}) and Instruction Granularity (\textbf{C}). Specifically, we compare the easiest and most difficult categories within each of these
two dimensions (denoted as \textbf{D/C easiest/hardest}) while keeping all other settings identical. The experimental results are reported in Figure \ref{fig:Figure ablation}. The results demonstrate that the decoupling and categorical partitioning of dimensions and difficulty levels contribute substantially to the model's performance.

The left panel compares the two training sets across quality, text alignment, cross-modal consistency, and synchronization metrics; the middle panel reports the relative change when an easy or hard level of \textbf{C} or \textbf{D} is removed. Together, these views test both the value of the complete category grid and whether its difficulty levels supply distinct training signals. In particular, changes in the synchronization measures under the \textbf{D} removals show why a single aggregate benchmark score is insufficient to assess the data design.

\subsubsection{Alternating Modality-wise Optimization}

To verify the effectiveness of alternating optimization, we conduct ablation studies on whether to apply alternating optimization, and provide experimental results with varying switch intervals (i.e., the number of training steps per modality before switching to the other). All experiments are conducted on JavisBench-mini, as shown in Figure~\ref{fig:Figure ablation}. The results demonstrate that alternating optimization substantially benefits the performance of both towers. Under our current configuration, switching every four steps achieves relatively optimal performance.

The right panel includes no alternation and intervals of two, four, and eight steps. The four-step schedule gives the strongest overall balance across the plotted measures, although individual metrics need not peak at the same interval. This comparison motivates the switching interval used for the main LoRA and full-parameter runs.

\section{Related Work}

\subsection{Joint Audio--Video Generation}

Early audio--visual generation methods typically adopt asymmetric pipelines that synthesize one modality conditioned on the other. For example, FoleyCrafter~\citep{zhang2026foleycrafter} and MMAudio~\citep{cheng2025taming} generate temporally aligned audio from video. While such methods benefit from strong unimodal priors, their asymmetric formulations limit direct bidirectional co-generation. Recent works instead generate audio and video within a unified process. JavisDiT~\citep{liu2025javisdit}, UniVerse-1~\citep{wang2025universe}, Ovi~\citep{low2025ovi} and LTX-2~\citep{hacohen2026ltx} employ interacting dual-stream architectures with cross-modal fusion. These advances primarily focus on architecture, data, and pretraining. Beyond joint generation, video research covers scientific understanding, personalized dialogue, and narrative synthesis~\citep{scieducator,pvchat,onesentenceonedrama}, while geometric work explores 3D completion, robust depth, and dynamic point clouds~\citep{lascomp,syn2realdepth,Zhang20264DPC2hatTD}. AV-GRPO instead focuses on reward-guided post-training of synchronized audio and video, where joint sampling makes credit assignment difficult.

\subsection{Reinforcement Learning-based Post-training}

Reward-guided post-training has been widely explored for diffusion and flow-based generation. DDPO~\citep{black2023ddpo} and DPOK~\citep{Fan2023DPOKRL} formulate reverse denoising as a sequential decision process and optimize diffusion models using policy gradients. More recently, Flow-GRPO~\citep{liu2026flow} converts deterministic flow sampling into an equivalent stochastic process for online GRPO. DiffusionNFT~\citep{zheng2026diffusionnft} instead performs online reward optimization through the forward process. For multiple objectives, GDPO~\citep{Liu2026GDPOGR} independently normalizes individual rewards before aggregation, but does not resolve cross-modal credit assignment when both modalities vary simultaneously.

\section{Conclusion}\label{sec:conclusion}
We presented AV-GRPO, a modality-anchored reinforcement learning framework for joint audio--video post-training. By combining anchored rollouts with alternating frozen-tower optimization, AV-GRPO disentangles rewards and controls comparison conditions, reduces training overhead, redirects credit assignment, and adapts to modality-specific dynamics and sample difficulty. We also introduced 5DAV, a five-dimensionally decoupled dataset for controllable post-training. Experiments on JavisBench and VABench show consistent improvements over LTX-2.3 and GDPO in perceptual quality, text alignment, and audio--video coherence and synchronization under both LoRA and full fine-tuning. These results demonstrate an effective and scalable approach to improving coupled audio--video generators.
\label{end:main}

\subsubsection*{Acknowledgments}
This work is supported by Shanghai Artificial Intelligence Laboratory.

\nocite{*}

\bibliographystyle{iclr2027_conference}
\bibliography{iclr2027_conference}

\appendix

\newcommand{\X}{\mathcal{X}}
\newcommand{\Y}{\mathcal{Y}}
\newcommand{\Borel}{\mathcal{B}}
\newcommand{\Pspace}{\mathcal{P}}
\newcommand{\KLdiv}{\mathrm{D}_{\mathrm{KL}}}
\newcommand{\TV}{\mathrm{TV}}
\newcommand{\EE}{\mathbb{E}}
\newcommand{\PP}{\mathbb{P}}
\newcommand{\dd}{\,\mathrm{d}}
\newcommand{\Kv}{K_v^*}
\newcommand{\Ka}{K_a^*}
\newcommand{\Kvzero}{K_v^0}
\newcommand{\Kazero}{K_a^0}
\newcommand{\Pzero}{P_0}
\newcommand{\Pstar}{P^*}
\newcommand{\Mv}{M_v}
\newcommand{\Ma}{M_a}
\newcommand{\Vop}{V}
\newcommand{\Wop}{W}
\newcommand{\Fop}{F}
\newcommand{\deltav}{\delta_v}
\newcommand{\deltaa}{\delta_a}
\newcommand{\rhoD}{\rho}
\definecolor{revisionred}{RGB}{200,0,0}
\newcommand{\rev}[1]{{\color{revisionred}#1}}

\newtheorem{lemma}{Lemma}
\newtheorem{theorem}{Theorem}
\newtheorem{proposition}{Proposition}
\newtheorem{corollary}{Corollary}
\theoremstyle{definition}
\newtheorem{assumption}{Assumption}
\newtheorem{remark}{Remark}

\section{Proof of the Convergence and Joint Optimality of AV-GRPO}\label{app:theory}

This section studies KL-regularized reinforcement learning fine-tuning of audio-video two-tower diffusion models under the protocol of ``alternately fixing a single-modality complete denoising trajectory''.
We introduce a common reference joint law \(P_0\), and specify the two single-tower operation kernels as the regular conditional kernels of \(P_0\);
the reward \(r\) is bounded measurable, and \(\beta>0\). We define the exponentially tilted joint law
\(\dd P^*/\dd P_0 \propto e^{r/\beta}\), and construct the two conditional Gibbs optimal kernels \(K_v^*,K_a^*\).
Under the Dobrushin contraction condition \(\delta_v\delta_a<1\), we prove:
\(P^*\) is the unique global optimum of the joint KL-regularized objective;
\(K_v^*,K_a^*\) are the unique optimal kernels of the conditional subproblems, respectively, and are automatically compatible;
the systematic-scan Gibbs chain with kernels \(K_v^*,K_a^*\) has a unique stationary distribution \(P^*\);
starting from any initial joint law, the joint laws of both sampling phases converge geometrically in total variation to \(P^*\), with an explicit convergence rate.
If the model implements \(K_v^*,K_a^*\) via a non-interfering exact kernel oracle, then both the training rollouts and the inference distribution using the same alternating full-trajectory protocol possess the above convergence properties.

\subsection{Problem Setup and Basic Assumptions}

\subsubsection{Spaces and the Reference Joint Law}

Fix the text condition \(c\), which is omitted in the following. Let \(\X\) be the space of complete video denoising trajectories, and \(\Y\) the space of complete audio denoising trajectories.
Assume that \(\X,\Y\) are spaces equipped with fixed Polish topologies, with their Borel \(\sigma\)-algebras denoted by \(\Borel_\X,\Borel_\Y\), respectively.
Thus \(\X\times\Y\) is also a Polish space, and regular conditional probability kernels exist. Let \(\Pspace(\X),\Pspace(\Y),\Pspace(\X\times\Y)\)
denote the corresponding sets of probability measures.

\begin{assumption}[Common reference joint law H0]
There exists a reference joint probability measure \(P_0\in\Pspace(\X\times\Y)\). The actual reference kernel for
``running the video tower with the audio trajectory fixed'', \(K_v^0:\Y\times\Borel_\X\to[0,1]\), and the reference kernel for
``running the audio tower with the video trajectory fixed'', \(K_a^0:\X\times\Borel_\Y\to[0,1]\), are two versions of the regular conditional distributions of \(P_0\):
\[
K_v^0(\cdot\mid y)=P_0(\dd x\mid y),\quad P_{0,Y}\text{-a.s.},\qquad
K_a^0(\cdot\mid x)=P_0(\dd y\mid x),\quad P_{0,X}\text{-a.s.},
\]
and we select a jointly measurable version of them that is defined on all of \(\X,\Y\). Hence
\[
P_0(A\times B)=\int_B K_v^0(A\mid y)\,P_{0,Y}(\dd y)
=\int_A K_a^0(B\mid x)\,P_{0,X}(\dd x).
\]
\end{assumption}

\subsubsection{Reward and the Joint Objective}

Let \(r:\X\times\Y\to\mathbb{R}\) be a bounded Borel measurable function, \(\beta>0\), and let \(M=\|r\|_\infty\). Define the joint objective
\[
J(P)=\EE_P[r]-\beta\,\KL(P\parallel P_0),\qquad P\in\Pspace(\X\times\Y),
\]
with the convention that \(\KL(P\parallel P_0)=\infty\) when \(P\not\ll P_0\), so that \(J(P)=-\infty\).

\subsubsection{Conditional Subproblem Objectives}

Fix \(y\in\Y\); for any \(q\in\Pspace(\X)\), define the video subproblem
\[
L_v(q;y)=\int_\X r(x,y)\,q(\dd x)-\beta\,\KL\big(q\parallel K_v^0(\cdot\mid y)\big).
\]

Fix \(x\in\X\); for any \(q\in\Pspace(\Y)\), define the audio subproblem
\[
L_a(q;x)=\int_\Y r(x,y)\,q(\dd y)-\beta\,\KL\big(q\parallel K_a^0(\cdot\mid x)\big).
\]

\subsection{Exponential Tilting and Conditional Gibbs Optimal Kernels}

\subsubsection{Joint Exponential Tilting}

Define the normalizing constant and the probability measure
\[
Z=\int_{\X\times\Y} e^{r(x,y)/\beta}\,P_0(\dd x,\dd y),\qquad
\frac{\dd P^*}{\dd P_0}(x,y)=Z^{-1}e^{r(x,y)/\beta}.
\]
Since \(r\) is bounded, \(e^{-M/\beta}\le Z\le e^{M/\beta}\), hence \(0<Z<\infty\), and \(P^*\) and \(P_0\) are mutually absolutely continuous.
Their marginals are denoted by \(P^*_X\) and \(P^*_Y\), respectively.

\subsubsection{Conditional Normalizing Constants and Optimal Kernels}

Define

\[
Z_v(y)=\int_\X e^{r(x,y)/\beta}\,K_v^0(\dd x\mid y),\qquad
Z_a(x)=\int_\Y e^{r(x,y)/\beta}\,K_a^0(\dd y\mid x).
\]

By the measurability theorem for kernel integrals, \(Z_v,Z_a\) are measurable, and
\(e^{-M/\beta}\le Z_v(y),Z_a(x)\le e^{M/\beta}\), so the denominators are everywhere positive and finite. Define the optimal conditional kernels
\[
K_v^*(\dd x\mid y)=Z_v(y)^{-1}e^{r(x,y)/\beta}K_v^0(\dd x\mid y),
\]
\[
K_a^*(\dd y\mid x)=Z_a(x)^{-1}e^{r(x,y)/\beta}K_a^0(\dd y\mid x).
\]
The integrands are jointly measurable, and the normalizing constants are measurable and everywhere positive; therefore \(K_v^*,K_a^*\) are probability kernels defined on all conditional states.

\subsection{Joint KL Variational Identity}

\begin{lemma}[Joint Gibbs variational identity]\label{lem:joint-gibbs}
For any \(P\in\Pspace(\X\times\Y)\), in the extended real-valued sense,
\[
J(P)=\beta\log Z-\beta\,\KL(P\parallel P^*).
\]
Therefore \(P^*\) is the unique global optimum of \(J\) over all joint probability measures, with optimal value \(\beta\log Z\).
\end{lemma}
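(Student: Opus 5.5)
The plan is to prove the identity by splitting into two cases, according to whether \(P\ll P_0\). The uniqueness statement then follows from the basic properties of KL divergence.

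\emph{Case \(P\not\ll P_0\).} By convention \(\KL(P\parallel P_0)=\infty\), so \(J(P)=-\infty\). Since \(P^*\) and \(P_0\) are mutually absolutely continuous, \(P\not\ll P^*\) as well. Hence \(\KL(P\parallel P^*)=\infty\), and the right-hand side also equals \(-\infty\).

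\emph{Case \(P\ll P_0\).} Here \(P\ll P^*\) too, and the Radon--Nikodym chain rule gives
\[
\frac{\dd P}{\dd P^*}=\frac{\dd P}{\dd P_0}\cdot\frac{\dd P_0}{\dd P^*}=\frac{\dd P}{\dd P_0}\,Z\,e^{-r/\beta},\qquad P\text{-a.s.}
\]
Taking logarithms, we get \(\log\frac{\dd P}{\dd P^*}=\log\frac{\dd P}{\dd P_0}+\log Z-r/\beta\), \(P\)-a.s. Integrating against \(P\) should then give
\[
\KL(P\parallel P^*)=\KL(P\parallel P_0)+\log Z-\tfrac1\beta\EE_P[r].
\]
Multiplying by \(\beta\) and rearranging yields \(J(P)=\beta\log Z-\beta\KL(P\parallel P^*)\).

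The only delicate point, which I expect to be the main obstacle, is justifying this integration step when the KL divergences may be infinite. I would handle it as follows. The term \(\log Z-r/\beta\) is bounded, because \(|r|\le M\) and \(Z\in[e^{-M/\beta},e^{M/\beta}]\). Therefore \(\log\frac{\dd P}{\dd P^*}\) and \(\log\frac{\dd P}{\dd P_0}\) differ by a bounded function. I would use the representation \(\KL(P\parallel Q)=\int f\log f\,\dd Q\), where \(f=\dd P/\dd Q\). Its negative part is integrable because \(u\log u\ge -1/e\). Equivalently, \(\int \log f\,\dd P\) has integrable negative part under \(P\). With a bounded additive perturbation, one integral is finite exactly when the other is, and linearity holds in \([0,\infty]\). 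If \(\KL(P\parallel P_0)=\infty\), then \(\KL(P\parallel P^*)=\infty\) as well, and both sides equal \(-\infty\). Also, \(\EE_P[r]\) is always finite since \(r\) is bounded, so no \(\infty-\infty\) ambiguity arises.

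For the optimality claim, Gibbs' inequality gives \(\KL(P\parallel P^*)\ge0\), with equality iff \(P=P^*\). Hence \(J(P)\le\beta\log Z\), with equality exactly at \(P=P^*\). Finally, \(\beta\log Z\) is finite, so \(P^*\) is the unique maximizer and the optimal value is \(\beta\log Z\).
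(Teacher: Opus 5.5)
Your proposal is correct and follows essentially the same route as the paper: you split on whether $P\ll P_0$ using the mutual absolute continuity of $P^*$ and $P_0$, apply the Radon--Nikodym chain rule $\log(\mathrm{d}P/\mathrm{d}P^*)=\log(\mathrm{d}P/\mathrm{d}P_0)-r/\beta+\log Z$, integrate against $P$, and conclude with Gibbs' inequality. Your justification of the integration step (the negative part of the log-density is $P$-integrable, so a bounded additive perturbation preserves both finiteness and linearity) is a more careful version of the paper's one-line remark that the two KL terms are simultaneously finite or infinite.
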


\begin{proof}
Since \(\dd P^*/\dd P_0=e^{r/\beta}/Z\) has strictly positive finite upper and lower bounds, \(P^*\) and \(P_0\) are equivalent.
Therefore \(P\ll P_0\) if and only if \(P\ll P^*\). If this condition does not hold, both sides are \(-\infty\), and the identity holds.
Assume \(P\ll P_0\). By the Radon--Nikodym chain rule, \(P\)-a.s.\ we have
\[
\log\frac{\dd P}{\dd P^*}=\log\frac{\dd P}{\dd P_0}-\frac{r}{\beta}+\log Z.
\]
Integrating both sides against \(P\) and multiplying by \(\beta\), we obtain
\[
\beta\,\KL(P\parallel P^*)=\beta\,\KL(P\parallel P_0)-\EE_P[r]+\beta\log Z.
\]
Because \(r\) is bounded, the two KL terms are either simultaneously finite or simultaneously \(\infty\), so the above identity is unambiguous in the extended real-valued sense.
Rearranging yields
\[
J(P)=\beta\log Z-\beta\,\KL(P\parallel P^*)\le\beta\log Z.
\]
Finally, \(\KL(P\parallel P^*)=0\) if and only if \(P=P^*\). Hence \(P^*\) is the unique optimum.
\end{proof}

\subsection{Optimal Kernels of the Conditional Subproblems}

\begin{lemma}[Conditional Gibbs optimality]\label{lem:conditional-gibbs}
For each \(y\in\Y\), \(K_v^*(\cdot\mid y)\) is the unique global optimum of \(L_v(\cdot;y)\) over \(\Pspace(\X)\),
with optimal value \(\beta\log Z_v(y)\). For each \(x\in\X\), \(K_a^*(\cdot\mid x)\) is the unique global optimum of the audio subproblem,
with optimal value \(\beta\log Z_a(x)\).
\end{lemma}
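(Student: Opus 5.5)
The plan is to repeat the argument of Lemma~\ref{lem:joint-gibbs} fiberwise, with the joint reference $P_0$ replaced by the fixed conditional reference $K_v^0(\cdot\mid y)$ and the constant $Z$ replaced by $Z_v(y)$. Fix $y\in\Y$. By construction, $K_v^*(\cdot\mid y)$ has density $Z_v(y)^{-1}e^{r(x,y)/\beta}$ with respect to $K_v^0(\cdot\mid y)$. This density is bounded between $e^{-2M/\beta}$ and $e^{2M/\beta}$ because of the bounds $e^{-M/\beta}\le Z_v(y)\le e^{M/\beta}$ established above. Consequently $K_v^*(\cdot\mid y)$ and $K_v^0(\cdot\mid y)$ are mutually absolutely continuous. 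This equivalence holds for \emph{every} $y$, not only almost every $y$, because the kernels were defined everywhere.

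Next take any $q\in\Pspace(\X)$. If $q\not\ll K_v^0(\cdot\mid y)$, then $q$ is also not absolutely continuous with respect to $K_v^*(\cdot\mid y)$. In that case both $L_v(q;y)$ and $\beta\log Z_v(y)-\beta\,\KL(q\parallel K_v^*(\cdot\mid y))$ equal $-\infty$. Here we use that $\int r(x,y)\,q(\dd x)$ is finite, since $r(\cdot,y)$ is bounded and measurable as a section of a Borel function.

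If $q\ll K_v^0(\cdot\mid y)$, the Radon--Nikodym chain rule gives, $q$-a.s.,
\[
\log\frac{\dd q}{\dd K_v^*(\cdot\mid y)}=\log\frac{\dd q}{\dd K_v^0(\cdot\mid y)}-\frac{r(\cdot,y)}{\beta}+\log Z_v(y).
\]
Integrating against $q$ and multiplying by $\beta$ yields
\[
L_v(q;y)=\beta\log Z_v(y)-\beta\,\KL\big(q\parallel K_v^*(\cdot\mid y)\big).
\]
The two KL divergences are simultaneously finite or simultaneously infinite because $r$ is bounded, so the identity has an unambiguous meaning.

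From this identity, $L_v(q;y)\le\beta\log Z_v(y)$, with equality if and only if $\KL(q\parallel K_v^*(\cdot\mid y))=0$, that is, if and only if $q=K_v^*(\cdot\mid y)$. This proves both the optimal value and the uniqueness of the maximizer. The audio statement follows by the same argument with the roles of $\X$ and $\Y$ exchanged, using $K_a^0$, $Z_a$ and $K_a^*$.

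The only delicate step is the bookkeeping around infinite values, in particular justifying that $q\mapsto\int r\,\dd q$ is finite so that no expression of the form $\infty-\infty$ appears. Boundedness of $r$ handles this. The rest is a direct transcription of the proof of Lemma~\ref{lem:joint-gibbs}.
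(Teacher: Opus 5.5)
Your proposal is correct and matches the paper's proof: the paper also substitutes $K_v^0(\cdot\mid y)$, $K_v^*(\cdot\mid y)$, $Z_v(y)$ and $r(\cdot,y)$ into the Radon--Nikodym computation of Lemma~\ref{lem:joint-gibbs}, obtaining $L_v(q;y)=\beta\log Z_v(y)-\beta\,\KL(q\parallel K_v^*(\cdot\mid y))$, and concludes from nonnegativity of KL, with the audio case by symmetry. Your write-up spells out details the paper leaves to ``applies verbatim'', namely the everywhere-in-$y$ equivalence of the two kernels and the $-\infty$ case.
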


\begin{proof}
Fix \(y\), and replace \(P_0,P^*,Z,r\) by \(K_v^0(\cdot\mid y),K_v^*(\cdot\mid y),Z_v(y),r(\cdot,y)\), respectively.
The Radon--Nikodym computation of Lemma \ref{lem:joint-gibbs} applies verbatim, giving
\[
L_v(q;y)=\beta\log Z_v(y)-\beta\,\KL\big(q\parallel K_v^*(\cdot\mid y)\big).
\]
KL is nonnegative and vanishes only when \(q=K_v^*(\cdot\mid y)\), so the conclusion holds. The audio side is entirely symmetric.
\end{proof}

\subsection{Automatic Compatibility of the Two Optimal Conditional Kernels}

\begin{lemma}[Marginals and regular conditional kernels of \(P^*\)]\label{lem:marginal-conditional}
\[
P^*_Y(\dd y)=\frac{Z_v(y)}{Z}\,P_{0,Y}(\dd y),\qquad
P^*_X(\dd x)=\frac{Z_a(x)}{Z}\,P_{0,X}(\dd x).
\]
Hence \(P^*_Y\) is equivalent to \(P_{0,Y}\), and \(P^*_X\) is equivalent to \(P_{0,X}\).
The selected \(K_v^*,K_a^*\) are, respectively, versions of the video and audio regular conditional distributions of \(P^*\).
\end{lemma}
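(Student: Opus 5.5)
The plan is to compute everything directly from the disintegration of \(P_0\) in Assumption H0 and the definition \(\dd P^*/\dd P_0=Z^{-1}e^{r/\beta}\), using Tonelli throughout. This is legitimate because all integrands are nonnegative and jointly measurable, and \(e^{r/\beta}\) is bounded above and below.

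\textbf{Marginals.} Fix \(B\in\Borel_\Y\). Using \(P_0(\dd x,\dd y)=K_v^0(\dd x\mid y)\,P_{0,Y}(\dd y)\) from H0, we get
\[
P^*_Y(B)=P^*(\X\times B)=Z^{-1}\int_B\int_\X e^{r(x,y)/\beta}K_v^0(\dd x\mid y)\,P_{0,Y}(\dd y)=\int_B \frac{Z_v(y)}{Z}\,P_{0,Y}(\dd y).
\]
A small technical point deserves care here. H0 states the disintegration identity only on rectangles \(A\times B\). I would extend it to \(\int f\,\dd P_0=\int\!\int f(x,y)K_v^0(\dd x\mid y)P_{0,Y}(\dd y)\) for all nonnegative measurable \(f\). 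The route is a \(\pi\)--\(\lambda\) argument: both sides are measures agreeing on the rectangle \(\pi\)-system. One then passes from indicators to simple functions and finally uses monotone convergence.

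The same computation with the other disintegration \(K_a^0(\dd y\mid x)P_{0,X}(\dd x)\) gives \(P^*_X=(Z_a/Z)P_{0,X}\). Since \(e^{-M/\beta}\le Z_v,Z_a,Z\le e^{M/\beta}\), the densities \(Z_v/Z\) and \(Z_a/Z\) are bounded between positive constants, which yields mutual equivalence of the marginals.

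\textbf{Conditionals.} To show \(K_v^*\) is a regular conditional distribution of \(P^*\) given \(y\), I must check that \(K_v^*\) is a probability kernel, which was already established. It then remains to check that for all \(A\in\Borel_\X,B\in\Borel_\Y\),
\[
\int_B K_v^*(A\mid y)\,P^*_Y(\dd y)=\int_B \frac{1}{Z_v(y)}\int_A e^{r/\beta}K_v^0(\dd x\mid y)\,\frac{Z_v(y)}{Z}P_{0,Y}(\dd y)=P^*(A\times B).
\]
The \(Z_v(y)\) factors cancel pointwise, which is legitimate since \(Z_v>0\) everywhere. The final equality is the extended disintegration applied to \(f=\mathbf 1_{A\times B}e^{r/\beta}/Z\). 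Agreement on rectangles determines the measure \(P^*(\dd x,\dd y)=K_v^*(\dd x\mid y)P^*_Y(\dd y)\), so \(K_v^*\) is a version of the conditional. The audio side is symmetric.

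\textbf{Main obstacle.} The only real subtlety is bookkeeping of null sets. H0 asserts \(K_v^0\) equals the conditional only \(P_{0,Y}\)-a.s., but the disintegration identity itself holds exactly. The equivalence \(P^*_Y\sim P_{0,Y}\) then ensures that "a.s." statements transfer between \(P_0\) and \(P^*\). I would therefore state the extension-to-nonnegative-functions step explicitly once and reuse it for both marginals and both conditionals.
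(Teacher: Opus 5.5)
Your proposal is correct and follows the paper's proof. Both compute the marginals by integrating \(e^{r/\beta}/Z\) against the \(K_v^0\) (resp.\ \(K_a^0\)) disintegration of \(P_0\), and both verify the disintegration identity \(\int_B K_v^*(A\mid y)\,P^*_Y(\dd y)=P^*(A\times B)\) on rectangles. Your \(\pi\)--\(\lambda\)/monotone-convergence extension of H0 from rectangles to nonnegative measurable integrands is a justification the paper uses implicitly without stating it.
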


\begin{proof}
For any \(B\in\Borel_\Y\), using the decomposition of \(P_0\) via \(K_v^0\):
\[
P^*_Y(B)=Z^{-1}\int_B\int_\X e^{r/\beta}K_v^0(\dd x\mid y)P_{0,Y}(\dd y)
=Z^{-1}\int_B Z_v(y)P_{0,Y}(\dd y).
\]
Thus \(P^*_Y(\dd y)=Z^{-1}Z_v(y)P_{0,Y}(\dd y)\). \(Z_v\) has strictly positive finite upper and lower bounds, so the two audio marginals are equivalent.
The video marginal formula follows analogously.

Next, taking \(A\in\Borel_\X,B\in\Borel_\Y\), we have
\[
\int_B K_v^*(A\mid y)P^*_Y(\dd y)
=Z^{-1}\int_B\int_A e^{r/\beta}K_v^0(\dd x\mid y)P_{0,Y}(\dd y)
=P^*(A\times B).
\]
This is precisely the definition of \(K_v^*\) being a regular conditional distribution of \(P^*\). \(K_a^*\) is symmetric.
\end{proof}

\subsection{Alternating Kernels, Marginal Operators, and the Systematic-Scan Chain}

\subsubsection{Kernel--Measure Mixing}

For \(\mu\in\Pspace(\Y)\), \(\nu\in\Pspace(\X)\), define the joint distributions of the two phases
\[
\Mv\mu(\dd x,\dd y)=K_v^*(\dd x\mid y)\mu(\dd y),\qquad
\Ma\nu(\dd x,\dd y)=\nu(\dd x)K_a^*(\dd y\mid x).
\]
Define the corresponding marginal operators
\[
\Vop\mu(\dd x)=\int_\Y K_v^*(\dd x\mid y)\mu(\dd y),\qquad
\Wop\nu(\dd y)=\int_\X K_a^*(\dd y\mid x)\nu(\dd x).
\]
Here the \(Y\)-marginal of \(\Mv\mu\) is \(\mu\), and its \(X\)-marginal is \(\Vop\mu\); the \(X\)-marginal of \(\Ma\nu\) is \(\nu\), and its \(Y\)-marginal is \(\Wop\nu\).
Let \(\Fop=\Wop\Vop:\Pspace(\Y)\to\Pspace(\Y)\).

\subsubsection{One Complete Gibbs Sweep}

Starting from any joint law \(Q\in\Pspace(\X\times\Y)\), first keep \(y\) and resample \(x'\sim K_v^*(\cdot\mid y)\),
then keep \(x'\) and resample \(y'\sim K_a^*(\cdot\mid x')\). Denote one round of transition by \(G\); then
\[
QG=\Ma(\Vop Q_Y),\qquad (QG)_Y=\Wop\Vop Q_Y=\Fop Q_Y.
\]
In particular, the joint law after one round depends only on the \(Y\)-marginal of the input joint law.

\subsubsection{Invariance of \(P^*\)}

By Lemma \ref{lem:marginal-conditional}, \(P^*=\Mv P^*_Y=\Ma P^*_X\), and moreover \(P^*_X=\Vop P^*_Y\), \(P^*_Y=\Wop P^*_X\). Therefore
\[
P^*G=\Ma(\Vop P^*_Y)=\Ma P^*_X=P^*.
\]
So \(P^*\) is a stationary distribution of the alternating Gibbs chain. This step requires no ergodicity or convergence assumptions.

\subsection{Dobrushin Contraction and the Basic Contraction Lemma}

\subsubsection{Total Variation and Dobrushin Coefficients}

Define
\[
\|\mu-\mu'\|_{\TV}=\sup_A|\mu(A)-\mu'(A)|\in[0,1],
\]
\[
\deltav=\sup_{y,y'}\|K_v^*(\cdot\mid y)-K_v^*(\cdot\mid y')\|_{\TV},\qquad
\deltaa=\sup_{x,x'}\|K_a^*(\cdot\mid x)-K_a^*(\cdot\mid x')\|_{\TV}.
\]

\begin{assumption}[Verifiable weak coupling/contraction condition H1]
\[
\rhoD=\deltav\deltaa<1.
\]
This condition is used to derive uniqueness and convergence.
The supremum here is taken with respect to the full-domain conditional kernel versions selected above; H1 is an assumption on these versions.
\end{assumption}

\subsubsection{Markov Kernel Contraction Lemma}

\begin{lemma}[Dobrushin contraction]\label{lem:dobrushin}
Let \(K:S\times\Borel_T\to[0,1]\) be a probability kernel, and \(\delta(K)=\sup_{z,z'}\|K(\cdot\mid z)-K(\cdot\mid z')\|_{\TV}\).
Then for any \(\eta,\eta'\in\Pspace(S)\),
\[
\|\eta K-\eta'K\|_{\TV}\le\delta(K)\|\eta-\eta'\|_{\TV}.
\]
\end{lemma}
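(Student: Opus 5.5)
We prove the Dobrushin contraction directly in total variation, using the signed measure $\eta-\eta'$ and a Hahn--Jordan decomposition. We may assume $\eta\neq\eta'$, since otherwise both sides vanish. Set $\lambda=\eta-\eta'$, a signed measure of total mass zero. Its Jordan decomposition is $\lambda=\lambda^+-\lambda^-$ with $\lambda^+(S)=\lambda^-(S)=\|\eta-\eta'\|_{\TV}=:d>0$. Indeed, if $H$ is a Hahn positive set, then $\sup_A|\lambda(A)|=\lambda(H)=\lambda^+(S)$, and the zero total mass forces $\lambda^+(S)=\lambda^-(S)$.

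Normalize to obtain probability measures $\alpha=\lambda^+/d$ and $\alpha'=\lambda^-/d$ on $S$. Then for every measurable $B\subseteq T$,
\[
\eta K(B)-\eta'K(B)=d\Big(\int_S K(B\mid z)\,\alpha(\dd z)-\int_S K(B\mid z')\,\alpha'(\dd z')\Big).
\]

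The key step is to rewrite the bracket as a double integral against the product measure $\alpha\otimes\alpha'$:
\[
\int\!\!\int \big(K(B\mid z)-K(B\mid z')\big)\,\alpha(\dd z)\,\alpha'(\dd z').
\]
This is valid because both are probability measures and $z\mapsto K(B\mid z)$ is bounded and measurable, so Fubini applies. The integrand is bounded in absolute value by $\|K(\cdot\mid z)-K(\cdot\mid z')\|_{\TV}\le\delta(K)$. Hence $|\eta K(B)-\eta'K(B)|\le d\,\delta(K)$.

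Taking the supremum over $B$ then gives the claim. The inequality holds for every $B$, so no measurability of the supremum over $(z,z')$ is needed.

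The only point requiring care is the identity $\lambda^+(S)=\|\eta-\eta'\|_{\TV}$ under the $\sup_A|\mu(A)-\mu'(A)|$ convention used in the paper, rather than the total-variation-norm convention, which would introduce a factor of $2$. This identity follows from the Hahn decomposition as noted above. Everything else is routine.
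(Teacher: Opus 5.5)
Your proof is correct and follows essentially the same route as the paper: Jordan-decompose $\eta-\eta'$ as $d$ times a difference of two probability measures, then bound $\bigl|\int K(B\mid z)\,\alpha(dz)-\int K(B\mid z)\,\alpha'(dz)\bigr|$ by $\delta(K)$. The only cosmetic difference is that you integrate the pointwise bound $|K(B\mid z)-K(B\mid z')|\le\delta(K)$ against $\alpha\otimes\alpha'$, whereas the paper uses $\sup_z K(B\mid z)-\inf_z K(B\mid z)\le\delta(K)$, which is the same estimate.
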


\begin{proof}
Let \(\sigma=\eta-\eta'\). If \(t=\|\eta-\eta'\|_{\TV}=0\), the conclusion is obvious. Otherwise the Jordan decomposition of \(\sigma\) satisfies
\(\sigma^+=tp,\sigma^-=tq\), where \(p,q\) are probability measures. For any \(C\in\Borel_T\), let \(f_C(z)=K(C\mid z)\); then
\[
|(\eta K-\eta'K)(C)|=t\left|\int f_C\,\dd p-\int f_C\,\dd q\right|
\le t(\sup_z f_C(z)-\inf_z f_C(z))\le t\delta(K).
\]
Taking the supremum over \(C\) yields the conclusion.
\end{proof}

\subsubsection{Application to \(V,W\)}

\[
\|\Vop\mu-\Vop\mu'\|_{\TV}\le\deltav\|\mu-\mu'\|_{\TV},\qquad
\|\Wop\nu-\Wop\nu'\|_{\TV}\le\deltaa\|\nu-\nu'\|_{\TV},
\]
\[
\|\Fop\mu-\Fop\mu'\|_{\TV}\le\rhoD\|\mu-\mu'\|_{\TV},\qquad \rhoD=\deltav\deltaa<1.
\]

\subsection{Existence, Uniqueness, and Identification of the Self-Consistent Marginals}

\begin{lemma}[Unique self-consistent marginals]\label{lem:self-consistent-marginals}
Under H1, \(\Fop=\Wop\Vop\) is a contraction mapping on the complete metric space \((\Pspace(\Y),\|\cdot\|_{\TV})\).
Hence there exists a unique \(\mu^*\in\Pspace(\Y)\) satisfying \(\Fop\mu^*=\mu^*\). Let \(\nu^*=\Vop\mu^*\);
then \((\mu^*,\nu^*)\) is the unique solution of the equations \(\mu=\Wop\nu,\ \nu=\Vop\mu\).
\end{lemma}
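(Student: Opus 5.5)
The argument is a direct application of the Banach fixed-point theorem, followed by a short algebraic reduction for the coupled system.

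\textbf{Completeness.} I will check that \((\Pspace(\Y),\|\cdot\|_{\TV})\) is a complete metric space. Embed \(\Pspace(\Y)\) into the Banach space of finite signed measures on \(\Borel_\Y\) with the total-variation norm; our \(\|\cdot\|_{\TV}\) is half of that norm. Take a Cauchy sequence \(\mu_n\). Then \(\mu_n(A)\) converges uniformly in \(A\) to a set function \(\mu\). Countable additivity of \(\mu\) follows from the uniform convergence: given disjoint \(A_k\), approximate \(\mu\) by \(\mu_n\) uniformly and use the countable additivity of \(\mu_n\). Clearly \(\mu\ge 0\) and \(\mu(\Y)=1\). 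So \(\mu\in\Pspace(\Y)\), and \(\|\mu_n-\mu\|_{\TV}\to0\) by the uniformity.

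\textbf{Fixed point of \(\Fop\).} Well-definedness of \(\Vop\) and \(\Wop\) comes from kernel–measure mixing: \(\Vop\mu\) is a probability measure because \(K_v^*\) is a probability kernel, and likewise for \(\Wop\). Hence \(\Fop=\Wop\Vop\) maps \(\Pspace(\Y)\) into itself. The displayed consequences of Lemma~\ref{lem:dobrushin} give
\[
\|\Fop\mu-\Fop\mu'\|_{\TV}\le\deltaa\,\|\Vop\mu-\Vop\mu'\|_{\TV}\le\rhoD\,\|\mu-\mu'\|_{\TV},
\]
and H1 gives \(\rhoD<1\), so \(\Fop\) is a strict contraction. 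Banach's fixed-point theorem then yields a unique \(\mu^*\) with \(\Fop\mu^*=\mu^*\). Iterates \(\Fop^n\mu_0\) converge to it geometrically at rate \(\rhoD^n\); this is not needed here but will be reused later.

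\textbf{The coupled system.} Put \(\nu^*=\Vop\mu^*\). Then \(\Wop\nu^*=\Wop\Vop\mu^*=\Fop\mu^*=\mu^*\), so \((\mu^*,\nu^*)\) solves \(\mu=\Wop\nu,\ \nu=\Vop\mu\). For uniqueness, take any solution \((\mu,\nu)\). Substituting gives \(\mu=\Wop\Vop\mu=\Fop\mu\), so \(\mu=\mu^*\) by uniqueness of the fixed point, and then \(\nu=\Vop\mu^*=\nu^*\).

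The main point needing care is completeness of \(\Pspace(\Y)\) in total variation, which I would verify as above, or cite as standard since \(\Pspace(\Y)\) is a closed subset of the Banach space of signed measures. Everything else is immediate from Lemma~\ref{lem:dobrushin} and H1.
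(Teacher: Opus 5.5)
Your proof is correct and follows the paper's argument essentially step for step. Both use completeness of \((\Pspace(\Y),\|\cdot\|_{\TV})\) as a closed subset of the Banach space of finite signed measures, the Lipschitz bound \(\rhoD=\deltav\deltaa<1\) for \(\Fop\) from Lemma~\ref{lem:dobrushin}, Banach's fixed-point theorem, and the substitution \(\mu=\Wop\Vop\mu\) to get uniqueness of the coupled system. Your direct check of completeness, via uniform convergence of the set functions and then countable additivity, is a correct elaboration of what the paper simply cites.
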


\begin{proof}
Finite signed measures form a Banach space under the total variation norm, and \(\Pspace(\Y)\) is a closed subset thereof, so \((\Pspace(\Y),\TV)\) is complete.
By Lemma \ref{lem:dobrushin}, the Lipschitz constant of \(\Fop\) is at most \(\rhoD<1\). The Banach fixed point theorem yields the unique fixed point \(\mu^*\);
and for any \(\mu_0\in\Pspace(\Y)\),
\[
\|\Fop^n\mu_0-\mu^*\|_{\TV}\le\rhoD^n\|\mu_0-\mu^*\|_{\TV}.
\]
Let \(\nu^*=\Vop\mu^*\); then \(\Wop\nu^*=\Wop\Vop\mu^*=\Fop\mu^*=\mu^*\), so \((\mu^*,\nu^*)\) is self-consistent.
Conversely, any self-consistent pair \((\mu,\nu)\) satisfies \(\mu=\Wop\nu=\Wop\Vop\mu=\Fop\mu\), hence \(\mu=\mu^*\);
and then \(\nu=\Vop\mu=\Vop\mu^*=\nu^*\).
\end{proof}

\begin{remark}[Identification with the marginals of the jointly optimal distribution]
By Lemma \ref{lem:marginal-conditional}, the two marginals of \(P^*\) satisfy
\[
P^*_X=\Vop P^*_Y,\qquad P^*_Y=\Wop P^*_X=\Wop\Vop P^*_Y=\Fop P^*_Y.
\]
So \(P^*_Y\) is a fixed point of \(\Fop\). By uniqueness, \(\mu^*=P^*_Y\), \(\nu^*=P^*_X\).
Thus the self-consistent marginals not only exist and are unique, but are exactly the marginals of the jointly KL-optimal distribution.
\end{remark}

\subsection{The Unique Stationary Joint Distribution}

\begin{lemma}[Unique stationary distribution of the alternating chain]\label{lem:unique-stationary}
Under H0, bounded reward, and H1, the stationary distribution of the systematic-scan Gibbs chain \(G\) exists and is unique, and the unique stationary distribution is \(P^*\).
\end{lemma}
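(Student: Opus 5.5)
Existence is already in hand: the invariance computation in the subsection on the systematic-scan chain shows \(P^*G=\Ma(\Vop P^*_Y)=\Ma P^*_X=P^*\). That argument used only Lemma~\ref{lem:marginal-conditional}, and hence only H0 and boundedness of \(r\). So \(P^*\) is a stationary distribution of \(G\), and the work that remains is uniqueness.

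For uniqueness, I will take an arbitrary stationary law \(Q\in\Pspace(\X\times\Y)\), so that \(QG=Q\). The key structural fact, recorded in the one-sweep subsection, is that \(QG=\Ma(\Vop Q_Y)\) and \((QG)_Y=\Fop Q_Y\). Taking \(Y\)-marginals of \(QG=Q\) gives \(Q_Y=\Fop Q_Y\). Thus \(Q_Y\) is a fixed point of \(\Fop\) on \(\Pspace(\Y)\). Under H1, Lemma~\ref{lem:self-consistent-marginals} says \(\Fop\) is a TV-contraction with a unique fixed point \(\mu^*\), and the Remark identifies \(\mu^*=P^*_Y\). Hence \(Q_Y=P^*_Y\).

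Next I substitute this back into the stationarity equation. We get \(Q=QG=\Ma(\Vop Q_Y)=\Ma(\Vop P^*_Y)=\Ma P^*_X=P^*\), where the last two equalities again come from Lemma~\ref{lem:marginal-conditional}, via \(P^*_X=\Vop P^*_Y\) and \(P^*=\Ma P^*_X\). Therefore every stationary distribution coincides with \(P^*\), and the stationary distribution is unique.

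I do not expect a serious obstacle. The only delicate point is that the stationarity equation must determine \(Q\) entirely through its \(Y\)-marginal. That holds because one sweep of \(G\) discards the incoming \(x\) and resamples it from \(K_v^*(\cdot\mid y)\). So the identity \(QG=\Ma(\Vop Q_Y)\) should be justified carefully, by a direct computation on rectangles \(A\times B\) extended via the \(\pi\)-\(\lambda\) theorem. I also need to note that the kernels are defined on all of \(\X,\Y\), so the TV contraction of Lemma~\ref{lem:dobrushin} applies to arbitrary \(Q_Y\), including ones not absolutely continuous with respect to \(P_{0,Y}\).
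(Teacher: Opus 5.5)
Your proposal is correct and follows essentially the same route as the paper. Existence comes from the invariance \(P^*G=P^*\). Uniqueness comes from taking \(Y\)-marginals of \(Q=QG\) to get \(Q_Y=\Fop Q_Y\), identifying \(Q_Y\) with the unique fixed point \(\mu^*=P^*_Y\) of the contraction \(\Fop\), and substituting back into \(Q=\Ma(\Vop Q_Y))=\Ma P^*_X=P^*\). Your added care about verifying \(QG=\Ma(\Vop Q_Y)\) on rectangles, and about using the full-domain kernel versions so the contraction applies to arbitrary \(Q_Y\), makes explicit details the paper leaves implicit.
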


\begin{proof}
Existence has already been proved in A.6.3: \(P^*G=P^*\).

We now prove uniqueness. Let \(Q\) be any stationary distribution, i.e., \(Q=QG\). By the structure of one round of updates,
\[
Q=\Ma(\Vop Q_Y),\qquad Q_Y=\Wop\Vop Q_Y=\Fop Q_Y.
\]
So \(Q_Y\) is a fixed point of \(\Fop\). Lemma \ref{lem:self-consistent-marginals} gives
\(Q_Y=\mu^*=P^*_Y\). Substituting back into the joint distribution formula:
\[
Q=\Ma(\Vop\mu^*)=\Ma\nu^*=\Ma P^*_X=P^*.
\]
Therefore \(P^*\) is the unique stationary distribution.
\end{proof}

\begin{remark}[Strict distinction among optimality, stationarity, and convergence]
That \(P^*\) is the unique joint optimum follows from the KL variational identity and does not require \(\rhoD<1\);
the conditional compatibility of \(K_v^*,K_a^*\) follows from the common \(P_0\) and the common reward tilting, and does not require \(\rhoD<1\);
that \(P^*\) is a stationary distribution follows from the two conditional kernels of \(P^*\), and does not require \(\rhoD<1\);
uniqueness of the stationary distribution follows from the strict contraction of \(\Fop=\Wop\Vop\), and requires \(\rhoD<1\);
geometric convergence from any initial value follows from Banach iteration and the TV isometry, and requires \(\rhoD<1\).
\end{remark}

\subsection{Geometric Rollout Convergence from Arbitrary Initial Laws}

\subsubsection{The Two Sampling Phases}

Take any initial joint law \(Q_0\), and let \(\mu_0=Q_{0,Y}\). For \(n\ge1\), recursively define
\[
Q_n^v=\Mv\mu_{n-1},\qquad \nu_n=\Vop\mu_{n-1},
\]
\[
Q_n^a=\Ma\nu_n,\qquad \mu_n=\Wop\nu_n=\Fop\mu_{n-1}.
\]
\(Q_n^v\) is the joint law after the video resampling; \(Q_n^a\) is the joint law after the subsequent audio resampling, and is also the chain distribution after one complete sweep.

\subsubsection{TV Isometric Embedding of the Mixing Operators}

\(\Mv\) keeps \(y\) as is and only randomly generates \(x\). Hence the non-expansiveness of Markov kernels gives
\(\|\Mv\mu-\Mv\mu'\|_{\TV}\le\|\mu-\mu'\|_{\TV}\), while taking the \(Y\)-marginal gives the reverse inequality. Thus
\[
\|\Mv\mu-\Mv\mu'\|_{\TV}=\|\mu-\mu'\|_{\TV}.
\]
Similarly, \(\|\Ma\nu-\Ma\nu'\|_{\TV}=\|\nu-\nu'\|_{\TV}\).

\begin{theorem}[Geometric TV convergence of the two phases]\label{thm:phase-convergence}
Let \(\rhoD=\deltav\deltaa<1\). Then for any \(Q_0\) and \(n\ge1\):
\[
\|\mu_n-P^*_Y\|_{\TV}\le\rhoD^n\|\mu_0-P^*_Y\|_{\TV},
\]
\[
\|\nu_n-P^*_X\|_{\TV}\le\deltav\rhoD^{n-1}\|\mu_0-P^*_Y\|_{\TV},
\]
\[
\|Q_n^v-P^*\|_{\TV}\le\rhoD^{n-1}\|\mu_0-P^*_Y\|_{\TV},
\]
\[
\|Q_n^a-P^*\|_{\TV}\le\deltav\rhoD^{n-1}\|\mu_0-P^*_Y\|_{\TV}.
\]
When \(\rhoD=0\) and \(n=1\), we take \(\rhoD^0=1\) by convention.
\end{theorem}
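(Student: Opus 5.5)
The plan is to chain the contraction estimates for \(\Vop,\Wop\) with the fixed-point identities \(P^*_X=\Vop P^*_Y\), \(P^*_Y=\Wop P^*_X\) and \(P^*=\Mv P^*_Y=\Ma P^*_X\) from Lemma~\ref{lem:marginal-conditional} and the identification remark. Each bound will be derived from the previous one, and the TV isometry of the mixing operators will lift the marginal bounds to joint laws.

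\textbf{Step 1: the \(Y\)-marginal bound.} Since \(\mu_n=\Fop\mu_{n-1}\) and \(P^*_Y=\Fop P^*_Y\), the Lipschitz bound \(\|\Fop\mu-\Fop\mu'\|_{\TV}\le\rhoD\|\mu-\mu'\|_{\TV}\) gives \(\|\mu_n-P^*_Y\|_{\TV}\le\rhoD\|\mu_{n-1}-P^*_Y\|_{\TV}\). Induction on \(n\) then yields the first inequality. This is just the estimate already recorded in the proof of Lemma~\ref{lem:self-consistent-marginals}, with \(\mu^*=P^*_Y\).

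\textbf{Step 2: the \(X\)-marginal bound.} We have \(\nu_n=\Vop\mu_{n-1}\) and \(P^*_X=\Vop P^*_Y\). Lemma~\ref{lem:dobrushin} applied to \(K_v^*\) gives \(\|\nu_n-P^*_X\|_{\TV}\le\deltav\|\mu_{n-1}-P^*_Y\|_{\TV}\). Inserting Step 1 at index \(n-1\) gives \(\deltav\rhoD^{n-1}\|\mu_0-P^*_Y\|_{\TV}\). For \(n=1\) no iteration is used, which is consistent with the convention \(\rhoD^0=1\).

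\textbf{Step 3: the joint bounds.} Write \(Q_n^v=\Mv\mu_{n-1}\) and \(P^*=\Mv P^*_Y\). The isometry \(\|\Mv\mu-\Mv\mu'\|_{\TV}=\|\mu-\mu'\|_{\TV}\) gives \(\|Q_n^v-P^*\|_{\TV}=\|\mu_{n-1}-P^*_Y\|_{\TV}\le\rhoD^{n-1}\|\mu_0-P^*_Y\|_{\TV}\). Similarly, \(Q_n^a=\Ma\nu_n\) and \(P^*=\Ma P^*_X\), so \(\|Q_n^a-P^*\|_{\TV}=\|\nu_n-P^*_X\|_{\TV}\), and Step 2 finishes the proof.

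The only point needing care is the isometry claim. Only the upper bound is actually used here, and it holds because \(\Mv\) is the action of the Markov kernel \((x_{\text{old}},y)\mapsto K_v^*(\dd x\mid y)\delta_y(\dd y')\) on a law of \(y\). Non-expansiveness of Markov kernels in TV, which is Lemma~\ref{lem:dobrushin} with \(\delta\le1\), then gives it. I would verify this explicitly, including measurability of the product-kernel construction on the Polish spaces. Everything else is direct substitution.
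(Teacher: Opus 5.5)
Your proposal is correct and matches the paper's proof step for step. You use Banach iteration of $F$ for the first bound, the $\delta_v$-contraction of $V$ together with $\nu_n=V\mu_{n-1}$ for the second, and the TV isometry of $M_v,M_a$ with $P^*=M_vP^*_Y=M_aP^*_X$ for the two joint bounds. Your remark that only the non-expansive direction of the isometry is needed is also accurate, though the kernel should be indexed by $y$ alone, as $y\mapsto K_v^*(\mathrm{d}x\mid y)\,\delta_y(\mathrm{d}y')$ from $\mathcal{Y}$ to $\mathcal{X}\times\mathcal{Y}$, since $M_v$ acts on laws on $\mathcal{Y}$.
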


\begin{proof}
The first inequality is the Banach iteration. The second follows from \(\nu_n=\Vop\mu_{n-1}\) and the \(\deltav\)-contraction of \(\Vop\).
Since \(P^*=\Mv P^*_Y=\Ma P^*_X\), combining the two isometry identities yields the third and fourth inequalities.
\end{proof}

\subsection{Main Theorem}

\begin{theorem}[Idealized AV-GRPO Gibbs theorem]\label{thm:main}
Fix \(c\). Let \(\X,\Y\) be Polish trajectory spaces, and suppose H0 holds; \(r\) is bounded measurable, and \(\beta>0\).
Define \(P^*,K_v^*,K_a^*\) from \(P_0\) and \(r\). Assume the selected full-domain versions satisfy \(\rhoD=\deltav\deltaa<1\). Then:
\begin{enumerate}[leftmargin=*]
\item \(P^*\) is the unique global optimum of the joint objective \(J(P)=\EE_P[r]-\beta\KL(P\parallel P_0)\), with optimal value \(\beta\log Z\);
\item \(K_v^*,K_a^*\) are, respectively, the unique optimal kernels of the conditional KL subproblems when the counterpart trajectory is fixed, and they are two versions of the regular conditional distributions of \(P^*\);
\item the alternating Gibbs chain has a unique stationary distribution \(P^*\);
\item the self-consistent marginal equations \(\mu=\Wop\nu,\ \nu=\Vop\mu\) have a unique solution \((P^*_Y,P^*_X)\);
\item starting from any initial joint law, the joint laws of the video phase and the audio phase both converge to \(P^*\) in total variation at the explicit rates of Theorem \ref{thm:phase-convergence}.
\end{enumerate}
\end{theorem}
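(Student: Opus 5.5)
The main theorem is an assembly of the lemmas already established in this appendix, so the plan is to verify each item by citing the corresponding result and checking that the hypotheses match. Throughout, fix \(c\), H0, boundedness of \(r\) with \(M=\|r\|_\infty\), and \(\beta>0\). These give \(0<Z<\infty\) and \(e^{-M/\beta}\le Z_v,Z_a\le e^{M/\beta}\), so \(P^*,K_v^*,K_a^*\) are well-defined probability measures/kernels on all conditional states. This is the only place where boundedness enters beyond the variational identities.

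Items 1 and 2 do not use H1.

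For item 1, I will invoke Lemma \ref{lem:joint-gibbs}. It gives \(J(P)=\beta\log Z-\beta\,\KL(P\parallel P^*)\) in the extended-real sense, and uniqueness follows from \(\KL(P\parallel P^*)=0\iff P=P^*\).

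For item 2, Lemma \ref{lem:conditional-gibbs} gives pointwise (for every \(y\), resp. \(x\)) unique optimality of \(K_v^*(\cdot\mid y)\) and \(K_a^*(\cdot\mid x)\), with optimal values \(\beta\log Z_v(y)\) and \(\beta\log Z_a(x)\). Lemma \ref{lem:marginal-conditional} then identifies them as versions of the regular conditional distributions of \(P^*\). This compatibility is exactly what makes \(P^*=\Mv P^*_Y=\Ma P^*_X\), which I will use below.

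Items 3–5 use H1.

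For item 3, I cite invariance \(P^*G=P^*\) (from the compatibility just noted) together with Lemma \ref{lem:unique-stationary} for uniqueness. That lemma reduces any stationary \(Q\) to a fixed point \(Q_Y\) of \(\Fop\), and then reconstructs \(Q=\Ma(\Vop Q_Y)\).

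For item 4, I combine Lemma \ref{lem:self-consistent-marginals}, which gives a unique solution \((\mu^*,\nu^*)\) via Banach's fixed point theorem for the \(\rhoD\)-contraction \(\Fop\) on the complete space \((\Pspace(\Y),\TV)\), with the identification remark \(\mu^*=P^*_Y\), \(\nu^*=P^*_X\).

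For item 5, I quote Theorem \ref{thm:phase-convergence}. Its proof uses the TV isometries \(\|\Mv\mu-\Mv\mu'\|_{\TV}=\|\mu-\mu'\|_{\TV}\) and \(\|\Ma\nu-\Ma\nu'\|_{\TV}=\|\nu-\nu'\|_{\TV}\), applied with \(\mu'=P^*_Y\) and \(\nu'=P^*_X\).

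The step I expect to need the most care is not a new estimate. It is making sure the ``versions'' issue is consistent. H1 is imposed on the selected full-domain kernels \(K_v^*,K_a^*\), and these are the same objects used in \(\Vop,\Wop,G\). Meanwhile, the regular-conditional identification of item 2 holds only \(P^*\)-a.s. I will check that every equality involving \(P^*\) (invariance, identification of fixed points) is an integral identity against \(P^*_Y\) or \(P^*_X\). Such identities are insensitive to null-set modifications, while the contraction bounds are stated with suprema over all states and so hold for the chosen versions. With this checked, the five items follow directly, and the theorem is proved by concatenation.
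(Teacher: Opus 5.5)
Your proposal is correct and matches the paper's proof: item 1 from Lemma~\ref{lem:joint-gibbs}, item 2 from Lemmas~\ref{lem:conditional-gibbs} and~\ref{lem:marginal-conditional}, item 3 from the invariance \(P^*G=P^*\) plus Lemma~\ref{lem:unique-stationary}, item 4 from Lemma~\ref{lem:self-consistent-marginals} with the identification remark, and item 5 from Theorem~\ref{thm:phase-convergence}. Your extra check on versions is sound and agrees with how the paper handles it implicitly: every identity involving \(P^*\) is an integrated disintegration identity valid for all measurable sets, while H1 and the contraction bounds are imposed on the selected everywhere-defined kernels.
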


\begin{proof}
Conclusion (1) is Lemma \ref{lem:joint-gibbs}. Conclusion (2) follows from Lemma \ref{lem:conditional-gibbs}
and Lemma \ref{lem:marginal-conditional}. The invariance of \(P^*\) follows from A.6.3;
Lemma \ref{lem:unique-stationary} then uses Dobrushin contraction to prove its uniqueness, giving (3).
Lemma \ref{lem:self-consistent-marginals} identifies the unique self-consistent marginals, giving (4).
Finally, Theorem \ref{thm:phase-convergence} gives the geometric TV convergence of both phases, giving (5).
All conclusions have been proved by the preceding lemmas.
\end{proof}

\subsection{Connection with Alternating Training}

\begin{assumption}[exact-kernel non-interference oracle H2]
The video tower policy class contains the selected full-domain kernel \(K_v^*\), and the audio tower policy class contains \(K_a^*\).
The video update oracle returns, for each \(y\), the unique optimal kernel \(K_v^*(\cdot\mid y)\) of Lemma \ref{lem:conditional-gibbs};
the audio update oracle returns, for each \(x\), \(K_a^*(\cdot\mid x)\).
Furthermore, we assume independently that updating the parameters of either tower does not change the conditional kernel already realized by the other tower.
\end{assumption}

\begin{corollary}[Idealized training--inference consistency]
Under the main theorem and H2, after the video tower and the audio tower each complete one exact update, the two policy kernels are fixed as \(K_v^*,K_a^*\), respectively.
Thereafter, the joint distributions of the video phase and the audio phase of the training rollouts, as well as the inference distribution adopting the same full-trajectory alternating resampling protocol,
all converge geometrically in TV to \(P^*\) from any initial joint law. The limiting joint distribution attains the global optimal value \(\beta\log Z\) of \(J\).
\end{corollary}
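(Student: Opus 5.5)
The plan is to reduce the corollary to Theorem~\ref{thm:main} and Theorem~\ref{thm:phase-convergence} by showing that, after the two exact updates, training rollouts and inference are literally the systematic-scan Gibbs chain $G$ built from $\Kv,\Ka$.

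First I fix the policy kernels. Suppose the video tower is updated first. By H2 the video oracle returns, for every $y$, the unique maximizer of $L_v(\cdot;y)$; by Lemma~\ref{lem:conditional-gibbs} this is $\Kv(\cdot\mid y)$, so the realized video kernel becomes $\Kv$ on all of $\Y$. This is well defined because the selected version of $\Kv$ is full-domain. The audio update then sets the audio kernel to $\Ka$. By the non-interference clause of H2, this second update leaves the video kernel equal to $\Kv$. The order does not matter, by symmetry.

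The reference kernels $\Kvzero,\Kazero$ appearing in $L_v,L_a$ are fixed by H0. They do not depend on the current parameters, so later exact updates return the same kernels: the pair $(\Kv,\Ka)$ is a fixed point of the update oracle. Hence, from this time on, the two policy kernels stay equal to $\Kv,\Ka$. This step needs the most care, because one must check that the subproblems in H2 are posed against the fixed references and not against the current policy. This is the reading the paper's objectives (Eq.~(\ref{eq:conditional-subproblem}), with $\pi_{\mathrm{ref}}$) adopt, and I will make it explicit.

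Next I identify the rollouts with the chain. A video-phase rollout keeps the current anchor audio trajectory $y$ and draws $x'\sim\Kv(\cdot\mid y)$. If the current joint law has $Y$-marginal $\mu$, the resulting law is therefore $\Mv\mu$. An audio-phase rollout keeps $x'$ and draws $y'\sim\Ka(\cdot\mid x')$, giving $\Ma\nu$ with $\nu=\Vop\mu$. Alternating the phases thus produces exactly the laws $Q_n^v,Q_n^a$ of the recursion defined before Theorem~\ref{thm:phase-convergence}, started from whatever joint law $Q_0$ is present when the kernels become fixed. The inference protocol uses the same full-trajectory alternating resampling, so it generates the same sequence from its own initial law.

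Theorem~\ref{thm:phase-convergence} then gives $\|Q_n^v-\Pstar\|_{\TV}\le\rhoD^{n-1}\|\mu_0-\Pstar_Y\|_{\TV}$ and the analogous bound for $Q_n^a$, for any $Q_0$. Since $\rhoD<1$, both phases converge geometrically to $\Pstar$. Finally, part (1) of Theorem~\ref{thm:main}, i.e.\ Lemma~\ref{lem:joint-gibbs}, shows that the limit $\Pstar$ attains $J(\Pstar)=\beta\log Z$, the global optimal value.

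I would not claim that $J(Q_n)\to\beta\log Z$, since TV convergence does not control the KL term. The statement only asserts that the limiting law attains the optimum, and that is all I would prove.
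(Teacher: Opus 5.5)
Your proposal is correct and follows essentially the same route as the paper. Lemma~\ref{lem:conditional-gibbs} plus H2 fix the realized kernels at $K_v^*,K_a^*$; the rollouts and the inference chain are then exactly the recursion $Q_n^v=M_v\mu_{n-1}$, $Q_n^a=M_a\nu_n$; and Theorem~\ref{thm:phase-convergence} together with Lemma~\ref{lem:joint-gibbs} gives the geometric TV convergence and the value $\beta\log Z$. The paper leaves two points implicit that you make explicit, and both are sound: the subproblems are posed against the fixed references, so repeated exact updates return the same kernels, and the claim concerns only $J(P^*)$, not $\lim_n J(Q_n)$.
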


\begin{proof}
By Lemma \ref{lem:conditional-gibbs}, the unique optimal solutions of the pointwise conditional subproblems are \(K_v^*\) and \(K_a^*\), respectively.
H2 guarantees that the two towers realize these full-domain kernels after updating, and that subsequent updates do not change the other already-realized kernel.
Therefore, after each tower has been updated once, all subsequent sampling is described exactly by the fixed operators \(\Vop,\Wop,\Mv,\Ma\).
Theorem \ref{thm:main}(5) directly gives the geometric TV convergence of both phases and the inference chain to \(P^*\);
Theorem \ref{thm:main}(1) gives the objective value of \(P^*\).
\end{proof}


\subsection{Conclusion}

Under the common reference joint law H0, bounded reward, \(\beta>0\), and the Dobrushin contraction condition \(\rhoD=\deltav\deltaa<1\),
this section has proved:
\begin{enumerate}[leftmargin=*]
\item the exponentially tilted joint law \(P^*\) is the unique global optimum of the joint KL-regularized objective, with optimal value \(\beta\log Z\);
\item the two conditional Gibbs optimal kernels \(K_v^*,K_a^*\) are the unique optimal kernels of the conditional subproblems, respectively, are automatically compatible, and are the regular conditional distributions of \(P^*\);
\item the systematic-scan Gibbs chain with kernels \(K_v^*,K_a^*\) has a unique stationary distribution \(P^*\);
\item the self-consistent marginal equations have a unique solution, which is exactly the two marginals of \(P^*\);
\item starting from any initial joint law, the joint laws of the two sampling phases converge geometrically in total variation to \(P^*\), with explicit rates;
\item if the model implements \(K_v^*,K_a^*\) via a non-interfering exact-kernel oracle, then both the training rollouts and the inference distribution possess the same geometric convergence properties.
\end{enumerate}
The proof is closed under the idealized assumptions: optimality comes from the KL variational identity, compatibility comes from the common reference joint law,
and uniqueness and convergence come from Dobrushin contraction.

\section{Training Details}\label{app:training}
All training runs, including LoRA and full fine-tuning, are conducted on a single 8-GPU A800 (80 GB) node.
All prompts in the training dataset are fully randomly shuffled (rather than sampled in category-ordered sequences).
Given a sampled prompt, the LTX-2.3 model generates audio-video clips with resolution $544\times960$, $97$ frames, and $24$~FPS via 15-step denoising inference using the default random seed ($42$), producing $8$ samples per prompt. The composition of the reward used to evaluate each audio-video sample is described in Appendix~\ref{app:rewards}. For LoRA training, the learning rate starts from $3\times 10^{-6}$ and follows a cosine decay schedule over 1440 steps in total ( i.e., all data is seen once by each of the two towers: 5760 × 2/8 = 1440), with training stopped early at 480 steps. The noise level for sampling of the video tower is set to $0.02$ with a KL coefficient of $0.01$; for the audio tower, the sampling noise level is $0.8$ and the KL coefficient is $0.002$.
GDPO is trained for 480 steps under identical hyperparameters and the same data ordering. Since GDPO directly generates paired audio–video samples and updates both towers simultaneously, the same data is used to train each tower twice. For full fine-tuning, the learning rate starts at $1\times 10^{-6}$ and follows a cosine decay schedule defined over 1440 steps (keeping the same total step count as LoRA), with training stopped early at 480 steps. The noise levels and KL coefficients for both towers are kept identical to those used in LoRA training. Alternating optimization is adopted: training switches between the two towers every $4$ steps, with $8$ prompts consumed per step and the video tower updated first. Within each round, the video tower is first trained on $4\times 8 = 32$ prompts. Training then switches to the audio tower, which is updated using the same set of $32$ prompts, before proceeding to the next round with fresh data.

\section{Flow-GRPO and LongCat-Video Framework}\label{app:losses}
In this section, we present the complete details of the Flow-GRPO~\citep{liu2026flow} and the LongCat-Video~\citep{team2025longcat} framework.
\subsection{Flow-GRPO}
Flow-GRPO converts the deterministic flow ODE into an equivalent SDE that preserves identical marginal distributions with the original model at all timesteps. The reversed SDE for rectified flow is formulated as:
\begin{equation}
d\boldsymbol{x}_t = \left[\boldsymbol{v}_t(\boldsymbol{x}_t)+\frac{\sigma_t^2}{2t}\big(\boldsymbol{x}_t+(1-t)\boldsymbol{v}_t(\boldsymbol{x}_t)\big)\right]dt+\sigma_t d\boldsymbol{w}.
\end{equation}
where $d\boldsymbol{w}$ denotes the Wiener process increment, and $\sigma_t$ is the time-dependent noise schedule. Flow-GRPO sets $\sigma_t=a\sqrt{\frac{t}{1-t}}$, where scalar hyperparameter $a$ controls the stochastic intensity. Applying Euler-Maruyama discretization to the SDE yields the iterative update formula:
\begin{equation}
\boldsymbol{x}_{t+\Delta t}=
\boldsymbol{x}_t+\left[\boldsymbol{v}_\theta(\boldsymbol{x}_t,t)+\frac{\sigma_t^2}{2t}\big(\boldsymbol{x}_t+(1-t)\boldsymbol{v}_\theta(\boldsymbol{x}_t,t)\big)\right]\Delta t+\sigma_t\sqrt{\Delta t}\,\boldsymbol{\epsilon},
\end{equation}
with $\boldsymbol{\epsilon}\sim \mathcal{N}(0,I)$ introducing Gaussian noise. From Equation (2), the transition distribution $\pi_\theta(\boldsymbol{x}_{t+\Delta t}\mid\boldsymbol{x}_t,c)$ is an isotropic Gaussian distribution. Thus, the KL divergence between current policy and reference policy admits a closed-form solution:
\begin{equation}
D_{\mathrm{KL}}(\pi_\theta\Vert\pi_{\text{ref}})=
\frac{\Delta t}{2}\left(\frac{\sigma_t(1-t)}{2t}+\frac{1}{\sigma_t}\right)^2\|\boldsymbol{v}_\theta(\boldsymbol{x}_t,t)-\boldsymbol{v}_{\text{ref}}(\boldsymbol{x}_t,t)\|^2.
\end{equation}

Flow-GRPO adopts the group-relative formulation for advantage estimation. Given condition $c$, the model samples a group of $G$ reverse trajectories. All timesteps of one trajectory share the identical advantage computed via intra-group reward normalization:
\begin{equation}
\hat A_t^i=\frac{R(\boldsymbol{x}_0^i,c)-\mathrm{mean}(\{R(\boldsymbol{x}_0^i,c)\}_{i=1}^G)}{\mathrm{std}(\{R(\boldsymbol{x}_0^i,c)\}_{i=1}^G)}.
\end{equation}

The importance sampling ratio characterizes the probability ratio between old and new policy:
\begin{equation}
r_t^i(\theta)=\frac{p_\theta(\boldsymbol{x}_{t-1}^i\mid \boldsymbol{x}_t^i,c)}{p_{\theta_{\text{old}}}(\boldsymbol{x}_{t-1}^i\mid \boldsymbol{x}_t^i,c)}.
\end{equation}

Flow-GRPO optimizes the policy by maximizing the following objective:
\begin{multline}
\mathcal{J}_{\text{Flow-GRPO}}(\theta)
=\mathbb{E}_{c,\pi^{\text{old}},\{\boldsymbol{x}^i\}_{i=1}^G}
\Bigg[
\frac{1}{G}\sum_{i=1}^{G}\frac{1}{T}\sum_{t=0}^{T-1}
\Big(
\min\big(r_t^i\hat A_t^i,\,\mathrm{clip}(r_t^i,1-\varepsilon,1+\varepsilon)\hat A_t^i\big) \\
-\beta D_{\mathrm{KL}}(\pi_\theta\Vert\pi_{\text{ref}})
\Big)
\Bigg].
\label{eq:flow_grpo_obj}
\end{multline}

\subsection{LongCat-Video}

The gradient of the policy loss $\mathcal{L}_{\text{policy}}(\theta)=r_t^i(\theta)\hat{A}_t^i$ with respect to parameters $\theta$ can be derived. The gradient computation proceeds as follows:
\begin{equation}
\nabla_{\theta}\mathcal{L}_{\text{policy}}(\theta)=\hat{A}_t^i\nabla_{\theta}r_t^i(\theta).
\end{equation}
\begin{equation}
\nabla_{\theta}r_t^i(\theta)=\frac{p_{\theta}\left(x_{t-1}^i \mid x_t^i,c\right)}{p_{\theta_{\text{old}}}\left(x_{t-1}^i \mid x_t^i,c\right)}
\nabla_{\theta}\log p_{\theta}\left(x_{t-1}^i \mid x_t^i,c\right)
=\nabla_{\theta}\log p_{\theta}\left(x_{t-1}^i \mid x_t^i,c\right).
\end{equation}

Combining these results yields the policy gradient:
\begin{equation}
\nabla_{\theta}\mathcal{L}_{\text{policy}}(\theta)=\hat{A}_t^i r_t^i(\theta)\nabla_{\theta}\log p_{\theta}\left(x_{t-1}^i \mid x_t^i,c\right).
\label{eq:26}
\end{equation}

The score function $\nabla_{\theta}\log p_{\theta}\left(x_{t-1}\mid x_{t},c\right)$ is computed next. The conditional distribution is Gaussian:
\begin{equation}
p_{\theta}\left(x_{t-1}\mid x_{t},c\right)=\mathcal{N}\left(x_{t-1};\mu_{\theta}\left(x_{t},t,c\right),\sigma_t^2\Delta t\mathbf{I}\right).
\end{equation}
\begin{equation}
\nabla_{\theta}\log p_{\theta}=\frac{1}{\sigma_t^2\Delta t}\left(x_{t-1}-\mu_{\theta}\right)\cdot\nabla_{\theta}\mu_{\theta}.
\end{equation}

From the SDE sampling process, the following reparameterization holds:
\begin{equation}
x_{t-1}=\mu_{\theta}+\sigma_t\sqrt{\Delta t}\epsilon,\quad \epsilon\sim\mathcal{N}(0,\mathbf{I}),
\end{equation}

Substituting:
\begin{equation}
\nabla_{\theta}\log p_{\theta}=\frac{1}{\sigma_t^2\Delta t}\left(\sigma_t\sqrt{\Delta t}\epsilon\right)\cdot\nabla_{\theta}\mu_{\theta}=\frac{1}{\sigma_t\sqrt{\Delta t}}\epsilon\cdot\nabla_{\theta}\mu_{\theta}.
\end{equation}
\begin{equation}
\mu_{\theta}=x_t+\left[v_{\theta}\left(x_t,t,c\right)+\frac{\sigma_t^2}{2t}\left(x_t+(1-t)v_{\theta}\left(x_t,t,c\right)\right)\right]\left(-\Delta t\right)
\label{eq:27}
\end{equation}

Simplifying the drift term:
\begin{equation}
\begin{aligned}
\text{drift} &= v_{\theta}+\frac{\sigma_t^2}{2t}x_t+\frac{\sigma_t^2}{2t}(1-t)v_{\theta} \\
&= v_{\theta}\left(1+\frac{\sigma_t^2(1-t)}{2t}\right)+\frac{\sigma_t^2}{2t}x_t
\end{aligned}
\label{eq:28}
\end{equation}

Thus:
\begin{equation}
\mu_{\theta}=x_t-\Delta t\cdot \text{drift}
\label{eq:29}
\end{equation}

Taking the gradient with respect to $\theta$ (noting that $x_t$ is constant):
\begin{equation}
\nabla_{\theta}\mu_{\theta}=-\Delta t\cdot\nabla_{\theta}\text{drift}=-\Delta t\cdot\left(1+\frac{\sigma_t^2(1-t)}{2t}\right)\nabla_{\theta}v_{\theta}
\label{eq:30}
\end{equation}

Substituting into $\nabla_{\theta}\log p_{\theta}$:
\begin{equation}
\begin{aligned}
\nabla_{\theta}\log p_{\theta} &= \frac{1}{\sigma_t\sqrt{\Delta t}} \epsilon\cdot\left[-\Delta t\cdot\left(1+\frac{\sigma_t^2(1-t)}{2t}\right)\nabla_{\theta}v_{\theta}\right] \\
&= -\frac{\sqrt{\Delta t}}{\sigma_t}\left(1+\frac{\sigma_t^2(1-t)}{2t}\right)\epsilon\cdot\nabla_{\theta}v_{\theta}
\end{aligned}
\label{eq:31}
\end{equation}

Therefore, the gradient of the policy loss is:
\begin{equation}
\nabla_{\theta}\mathcal{L}_{\text{policy}}(\theta)=\hat{A}_t^i r_t^i(\theta)\cdot\left[-\frac{\sqrt{\Delta t}}{\sigma_t}\left(1+\frac{\sigma_t^2(1-t)}{2t}\right)\epsilon\cdot\nabla_{\theta}v_{\theta}\right]
\label{eq:32}
\end{equation}

Substituting $a=1$ and $\sigma_t=\sqrt{\frac{t}{1-t}}$ (so $\sigma_t^2=\frac{t}{1-t}$). The coefficient term is computed as:
\begin{equation}
1+\frac{\sigma_t^2(1-t)}{2t}=1+\frac{\frac{t}{1-t}\cdot(1-t)}{2t}=1+\frac{1}{2}=\frac{3}{2}
\label{eq:33}
\end{equation}

And the scaling term:

\begin{equation}
\frac{\sqrt{\Delta t}}{\sigma_t}=\frac{\sqrt{\Delta t}}{\sqrt{\frac{t}{1-t}}}=\sqrt{\Delta t}\cdot\sqrt{\frac{1-t}{t}}=\sqrt{\frac{\Delta t(1-t)}{t}}
\label{eq:34}
\end{equation}

Substituting these simplifications gives the final policy gradient expression:
\begin{equation}
\nabla_{\theta}\mathcal{L}_{\text{policy}}(\theta)=-\frac{3}{2}\hat{A}_t^i\sqrt{\frac{\Delta t(1-t)}{t}}\epsilon\cdot\nabla_{\theta}v_{\theta}
\label{eq:35}
\end{equation}

A reweighting coefficient is introduced, defined as:
\begin{equation}
\lambda_{\text{policy}}(t,\Delta t)=\kappa(t,\Delta t)^{-1}=\sqrt{\frac{t}{\Delta t(1-t)}}
\label{eq:36}
\end{equation}

The reweighted policy loss becomes:
\begin{equation}
\mathcal{L}_{\text{policy, reweighted}}(\theta)=\lambda_{\text{policy}}(t,\Delta t)\cdot\mathcal{L}_{\text{policy}}(\theta)
\label{eq:37}
\end{equation}

This yields the modified gradient:
\begin{equation}
\nabla_{\theta}\mathcal{L}_{\text{policy, reweighted}}(\theta)=-\frac{3}{2}\hat{A}_t^i\cdot\epsilon\cdot\nabla_{\theta}v_{\theta}
\label{eq:38}
\end{equation}

Similarly, the gradient of the KL divergence term can be derived as:
\begin{equation}
\nabla_{\theta}D_{\text{KL}}(\theta)=\Delta t\cdot\frac{9}{4}\cdot\frac{1-t}{t}\cdot\left(v_{\theta}-v_{\text{ref}}\right)\cdot\nabla_{\theta}v_{\theta}
\label{eq:39}
\end{equation}

This expression reveals that the KL loss gradient suffers from the same scaling issues as the policy loss gradient. To address this, a KL reweighting coefficient is also introduced:
\begin{equation}
\lambda_{\text{KL}}(t,\Delta t)=k_{\text{KL}}(t,\Delta t)^{-1}=\frac{t}{\Delta t(1-t)}
\label{eq:40}
\end{equation}

The reweighted KL loss becomes:
\begin{equation}
\mathcal{L}_{\text{KL, reweighted}}(\theta)=\lambda_{\text{KL}}(t,\Delta t)\cdot D_{\text{KL}}(\theta)
\label{eq:41}
\end{equation}

yielding the simplified gradient:
\begin{equation}
\nabla_{\theta}\mathcal{L}_{\text{KL, reweighted}}(\theta)=\frac{9}{4}\cdot\left(v_{\theta}-v_{\text{ref}}\right)\cdot\nabla_{\theta}v_{\theta}
\label{eq:42}
\end{equation}

Based on the reweighting coefficients for the policy loss and KL loss, the revised GRPO objective function is as follows:
\begin{multline}
\mathcal{J}_{\text{GRPO}}(\theta)=\mathbb{E}_{
\substack{c\sim\mathcal{C},\,t'\sim\mathcal{U}(0,T'-1),\\
\{\boldsymbol{x}^i\}_{i=1}^{G}\sim\pi_{\theta_{\text{old}}}(\cdot|c,t')}}
\left[
\frac{1}{G}\sum_{i=1}^{G}
\Bigg(
\lambda_{\text{policy}}\left(\frac{t'}{T},\Delta\frac{t'}{T}\right)\cdot\mathcal{L}_{\text{policy}}(\theta)\right.\\
\left.-\beta\lambda_{\text{KL}}\left(\frac{t'}{T},\Delta\frac{t'}{T}\right)\cdot D_{\text{KL}}\left(\pi_{\theta}\|\pi_{\text{ref}}\right)
\Bigg)
\right]
\label{eq:43}
\end{multline}

\section{Reward Models and Reward Composition}\label{app:rewards}

For video-tower optimization, we use VideoAlign~\citep{liu2026improving}, CLIP~\citep{radford2021learning}, and DeSync~\citep{iashin2024synchformer} to assess video quality, video--prompt alignment, and audio--video synchronization, respectively. For audio-tower optimization, we use Audiobox Aesthetics~\citep{tjandra2025meta}, CLAP~\citep{wu2023large}, and DeSync to assess audio quality, audio--prompt alignment, and audio--video synchronization, respectively.

Following GDPO, we first compute group-wise normalized advantages for each individual metric, then sum them to obtain a cumulative score per sample, and finally perform a second group-wise normalization to obtain the final advantage \( \hat{A}_t^i \).

Specifically, when training the video tower with audio anchored, the cumulative score for video sample \( i \) is:

\[
S_{\text{video}}^i = \text{norm}(\text{VQ}^i) + \text{norm}(\text{CLIP}^i) + \text{norm}(\text{-Desync}^i),
\]

where \( \text{norm}(r^i) = \frac{r^i - \text{mean}(\{r^i\}_{i=1}^G)}{\text{std}(\{r^i\}_{i=1}^G)} \) denotes group-wise standardization. \textbf{Note that for the DeSync metric, the lower the better.} The final advantage is:

\[
\hat{A}_{\text{video}}^i = \text{norm}\big(\{S_{\text{video}}^i\}_{i=1}^G\big).
\]

Symmetrically, when training the audio tower with video anchored as \( \tau_v \), the cumulative score for audio sample \( i \) is:

\[
S_{\text{audio}}^i = \text{norm}(\text{AQ}^i) + \text{norm}(\text{CLAP}^i) + \text{norm}(\text{-Desync}^i),
\]

with the final advantage:

\[
\hat{A}_{\text{audio}}^i = \text{norm}\big(\{S_{\text{audio}}^i\}_{i=1}^G\big).
\]

In practice, however, we observe that when training the audio tower, the model is prone to reward hacking: it easily finds a shortcut that sacrifices the CLAP score while substantially boosting AQ and Desync. This behavior severely undermines audio-prompt semantic alignment and is unacceptable.

Inspired by GDPO, we introduce a \textbf{semantic alignment guardrail} for CLAP. Let the group-wise average CLAP score be \( \bar{r}_{\text{CLAP}} = \frac{1}{G}\sum_{i=1}^G \text{CLAP}^i \), with a predefined threshold \( \eta_{\text{CLAP}} \). Only when \( \bar{r}_{\text{CLAP}} \geq \eta_{\text{CLAP}} \) are AQ and Desync incorporated into the advantage; otherwise, the CLAP advantage serves directly as the final advantage. Formally:

\[
\hat{A}_{\text{audio}}^i =
\begin{cases}
\text{norm}\big(\{S_{\text{audio}}^i\}_{i=1}^G\big), & \bar{r}_{\text{CLAP}} \geq \eta_{\text{CLAP}}, \\[6pt]
\text{norm}\big(\{\text{CLAP}^i\}_{i=1}^G\big), & \bar{r}_{\text{CLAP}} < \eta_{\text{CLAP}}.
\end{cases}
\]

When the group-average CLAP falls below the threshold, the model receives learning signals exclusively from CLAP, guiding it to prioritize audio-prompt semantic alignment before attending to other metrics.

\section{More Qualitative Results}\label{app:vis}

\begin{figure}[h]
    \centering
    \includegraphics[width=1.0\linewidth]
    {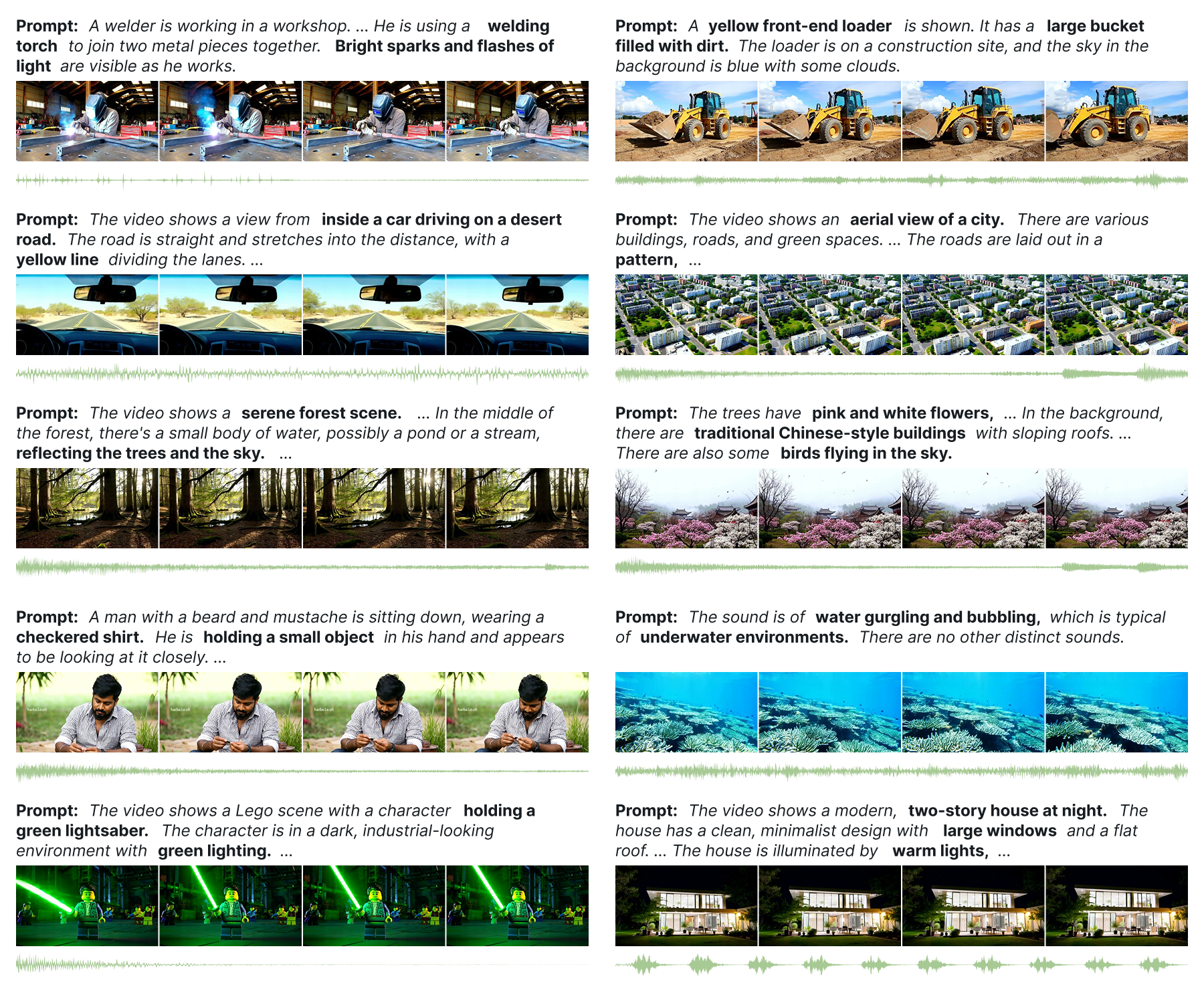}
    \vspace{-3mm}
    \caption{More qualitative results of AV-GRPO~(full).}
    \label{fig:appendix_vis}
    \vspace{-3mm}
\end{figure}

\end{document}